\newcommand{\citet}[1]{\citeauthor{#1} \shortcite{#1}}
\newcommand{\citep}{\cite}
\newtheorem{theorem}{Theorem}
\newtheorem{lemma}{Lemma}
\newtheorem{corollary}{Corollary}
\newtheorem{definition}{Definition}
\newtheorem{assumption}{Assumption}
\newcommand{\bw}{\mathbf{w}}
\newcommand{\acal}{\mathcal{A}}
\newcommand{\vcal}{\mathcal{V}}
\newcommand{\fcal}{\mathcal{F}}
\newcommand{\gcal}{\mathcal{G}}
\newcommand{\zcal}{\mathcal{Z}}
\newcommand{\dcal}{\mathcal{D}}
\newcommand{\ncal}{\mathcal{N}}
\newcommand{\ebb}{\mathbb{E}}
\newcommand{\rbb}{\mathbb{R}}
\newcommand{\bTheta}{\bm{\Theta}}
\DeclareMathOperator{\argmax}{arg\,max}
\DeclareMathOperator{\argmin}{arg\,min}
\title{On Performance Estimation in Automatic Algorithm Configuration}
\author{
Shengcai Liu,\textsuperscript{\rm 1}
Ke Tang,\textsuperscript{\rm 2}
Yunwen Lei,\textsuperscript{\rm 2}
Xin Yao\textsuperscript{\rm 2}
\\
\textsuperscript{\rm 1} School of Computer Science and Technology,
University of Science and Technology of China, Hefei 230027, China\\
\textsuperscript{\rm 2} Department
of
Computer Science and Engineering,
Southern University of Science and Technology, Shenzhen 518055, China\\
liuscyyf@mail.ustc.edu.cn,
\{tangk3, leiyw, xiny\}@sustech.edu.cn
}
\begin{document}
\maketitle
\begin{abstract}
Over the last decade, research on 
automated parameter tuning,
often referred to as automatic algorithm configuration (AAC),
has made significant progress.
Although the usefulness of such tools has been
widely recognized in real world applications,
the theoretical foundations of AAC are still very weak.
This paper addresses this gap by
studying the performance estimation problem in AAC.
More specifically, this paper
first proves the universal best performance estimator
in a practical setting,
and then establishes theoretical bounds on the estimation error,
i.e., the difference between the training
performance and the true performance
for a parameter configuration,
considering finite and infinite configuration spaces respectively.
These findings were verified in extensive experiments
conducted on four algorithm configuration scenarios involving
different problem domains.
Moreover, insights for 
enhancing existing AAC methods are also identified.
\end{abstract}

\section{Introduction}
Many high-performance algorithms
for solving computationally hard problems,
ranging from exact methods such as mixed integer
programming solvers to heuristic methods such
as local search, 
involve a large number of free parameters that need to be
carefully tuned to achieve their best performance.
In many cases, finding performance-optimizing
parameter settings is performed manually in an ad-hoc way.
However, the manually-tuning approach is often intensive
in terms of human effort \cite{lopez2016irace}
and thus there are a lot of attempts on automating this process
(see \cite{hutter2009paramils} for a comprehensive review),
which is usually referred to as
automatic algorithm configuration (AAC) \cite{Hoos121}.
Many AAC methods such as ParamILS \cite{hutter2009paramils},
GGA/GGA+\cite{ansotegui2009gender,AnsoteguiMSST15},
irace \cite{lopez2016irace} and SMAC \cite{hutter2011sequential}
have been proposed in the last few years.
They have been
used for boosting the algorithm's performance in
a wide range of domains such as
the boolean satisfiability problem (SAT) \cite{hutter2009paramils},
the traveling salesman problem (TSP) \cite{lopez2016irace,LiuT019},
the answer set programming (ASP) \cite{HutterLFLHLS14}
and machine learning \cite{FeurerKESBH15,KotthoffTHHL17}.

Despite the notable success achieved in application,
the theoretical aspects of AAC have been rarely investigated.
To our best knowledge, for AAC the first
theoretical analysis was given by \citet{Birattari2004},
in which the author analyzed expectations and
variances of different performance estimators that estimate the
true performance of a given parameter configuration on the basis
of $N$ runs of the configuration. It
is concluded in \cite{Birattari2004} that
performing one single run on $N$ different problem instances guarantees
that the variance of the estimate is minimized,
which has served as a guidance in the design of the
performance estimation mechanisms in later AAC methods including irace, ParamILS and SMAC. 
It is noted that the analysis in \cite{Birattari2004} assumes
that infinite problem instances could be sampled
for configuration evaluation.
However, in practice we are often only given
a set of finite training instances \cite{Hoos121}.

Recently, \citet{KleinbergLL17} introduced
a new algorithm configuration framework named
Structured Procrastination (SP),
which is guaranteed to find an approximately optimal parameter configuration
within a logarithmic factor of the optimal runtime in a worst-case sense.
Furthermore, the authors showed
that the gap between worst-case runtimes of
existing methods (ParamILS, GGA, irace, SMAC) and SP
could be arbitrarily large.
These results were later extended in
\cite{WeiszGS18,WeiszGS19}, in which the authors
proposed new methods, called LEAPSANDBOUNDS (LB) and
CapsAndRuns (CR),
with better runtime guarantees.
However, there is a discrepancy between
the algorithm configuration problem
addressed by these methods (SP, LB and CR)
and the problem that is most
frequently encountered in practice.
More specifically, these methods are designed to
find parameter configurations with approximately optimal 
performances on the input (training) instances;
while in practice it is more desirable
to find parameter
configurations that will perform well on
new unseen instances rather than
just the training instances \cite{Hoos121}.
Indeed, one of the most critical issues 
that needs to be addressed in AAC is the over-tuning
phenomenon \cite{Birattari2004},
in which the found parameter configuration is with 
excellent training performance,
but performs badly on new instances
\footnote{
To appropriately evaluate AAC methods,
in the literature,
including widely used benchmarks (e.g., AClib \cite{HutterLFLHLS14})
and major contests (e.g., the Configurable SAT Solver Challenge (CSSC) \cite{HutterLBBHL17}),
the common scheme is to use an independent test set
that has never been used during the configuration procedures to
test the found configurations.}.

Based on the above observation, this paper extends
the results of \cite{Birattari2004} in several aspects.
First, this paper introduces a new formulation of the algorithm configuration problem
(Definition~\ref{def:AAC_definition}),
which concerns the optimization of the
expected performance of the configured 
algorithm on an instance distribution $\dcal$.
Compared to the one considered by \citet{Birattari2004}
in which $\dcal$ is directly given
(thus could be sampled infinitely),
in the problem considered here $\dcal$
is unknown and inaccessible,
and the assumption
is that the input training instances (and the test instances) are
sampled $i.i.d$ from $\dcal$.
Therefore when solving this configuration problem,
we can only use the given finite training instances.
One key difficulty is that
the true performance of a parameter configuration is unachievable.
Subsequently, we could only run a configuration on
the training instances to obtain an estimate
of its true performance.
Thus a natural and important question is that,
given finite computational budgets,
e.g., $N$ runs of the configuration,
how to allocate them over the training instances
to obtain the most reliable estimate.
Moreover, given that we could obtain an
estimate of the true performance,
is it possible to quantify the difference
between the estimate and the true performance?

The second and the most important contribution of this paper
is that it answers the above questions theoretically.
More specifically,
this paper first introduces a universal best
performance estimator (Theorem~\ref{theorem:best_estimator})
that always distributes
the $N$ runs of a configuration to all training instances
as evenly as possible, such that
the performance estimate is most reliable. 
Then this paper investigates the estimation error, 
i.e., the difference between the training performance (the estimate)
and the true performance, and establishes
a bound on the estimation error
that holds for all configurations in the
configuration space, considering the cardinality
of the configuration space is finite (Theorem~\ref{theorem:finite}).
It is shown that the bound
deteriorates as the number of the considered configurations increases.
Since in practice the cardinality
of the configuration space
considered could be considerably large or even infinite,
by making two mild assumptions
on the considered configuration scenarios,
we remove the dependence on  the cardinality
of the configuration space
and finally establish a new bound on the estimation error
(Theorem~\ref{theorem:infinite}).

The effectiveness of these results have been 
verified in extensive experiments conducted on four configuration
scenarios involving problem domains including SAT, ASP and TSP.
Some potential directions for improving current
AAC methods from these results
have also been identified.

\section{Algorithm Configuration Problem}
\label{sec:2}
In a nutshell, the algorithm configuration problem concerns optimization
of the free parameters of a given parameterized algorithm (called target algorithm)
for which the performance is optimized.

Let $\acal$ denote the target algorithm and
let $p_{1},...,p_{h}$ be parameters of $\acal$.
Denote the set of possible values for each parameter $p_{i}$
as $\Theta_{i}$.
A parameter configuration $\theta$ (or simply configuration) 
of $\acal$ refers to a complete setting of $p_{1},...,p_{h}$,
such that the behavior of $\acal$
on a given problem instance is completely specified (up to
randomization of $\acal$ itself).
The configuration space $\bTheta = \Theta_{1} \times \Theta_{2}...\times \Theta_{h}$
contains all possible configurations of $\acal$.
For brevity, henceforth we will not distinguish
between $\theta$ and the instantiation of $\acal$
with $\theta$.
In real application $\acal$ is often randomized and its output is determined
by the used configuration $\theta$, the input instance $z$
and the random seed $v$.
Let $\dcal$ denote a probability distribution over a
space $\zcal$ of problem instances from which
$z$ is sampled.
Let $\gcal$ be
a probability distribution 
over a space $\vcal$ of random seeds from which $v$ is sampled.
In practice $\gcal$ is often given implicitly through a random
number generator.

Given an instance $z$ and a seed $v$, the quality of $\theta$ at $(z, v)$
is measured by a utility function $f_{\theta}:\zcal \times \vcal \rightarrow [L,U]$,
where $L,U$ are bounded real numbers.
In practice, it means running $\theta$ with $v$ on $z$,
and maps the result of this run to a scalar score.
Note how the mapping is done depends on the considered performance metric.
For examples, if we are interested in optimizing quality of
the solutions found by $\acal$,
then we might take the (normalized) cost of the solution
output by $\acal$ as the utility;
if we are interested in minimizing computational
resources consumed by $\acal$ (such as runtime,
memory or communication bandwidth),
then we might take the quantity of the consumed
resource of the run as the utility.
No matter which performance metric is considered,
in practice the value of $f_{\theta}$ is bounded
for all $\theta \in \bTheta$,
i.e., for all $\theta \in \bTheta$ and all
$(z,v) \in \zcal \times \vcal$,
$f_{\theta}(z,v) \in [L, U]$.

To measure the performance of $\theta$,
the expected value of the utility scores of $\theta$ across
different $(z,v)$, which is the most widely adopted criterion
in AAC applications
\cite{Hoos121}, is considered here.
More specifically,
as presented in Definition~\ref{def:AAC_definition},
the performance of $\theta$, denoted as $u(\theta)$, is its expected utility
score over instance distribution $\dcal$ and
random seed distribution $\gcal$. 
Without loss of generality, we always assume a smaller value is better
for $u(\theta)$.
The goal of the algorithm configuration problem is to find a
configuration from the configuration
space $\bTheta$ with the best performance.

\begin{definition}[Algorithm Configuration Problem]
\label{def:AAC_definition}
  Given a target algorithm $\acal$ with configuration space $\bTheta$,
  an instance distribution $\dcal$ defined over space $\zcal$,
  a random seed distribution $\gcal$ defined over space $\vcal$ and
  a utility function $f_{\theta}:\zcal \times \vcal \rightarrow [L,U]$ that
  measures the quality of $\theta$ at $(z, v)$,
  the algorithm configuration problem is to find a configuration $\theta^{\star}$ from
  $\bTheta$ with the best performance:
  \[
  \label{eq:denitionofutheta}
    \theta^{\star} \in \argmin_{\theta \in \bTheta} u(\theta),
  \]
  where  $u(\theta) = \ebb_{z \sim \dcal, v \sim \gcal}[f_{\theta}(z,v)]$.
\end{definition}
In practice, $\dcal$ is usually unknown
and the analytical
solution of $u(\theta)$ is unachievable.
Instead, usually we have a set of problem instances $\{z_{1},...,z_{K}\}$, called training instances, which are assumed
to be sampled $i.i.d$ from $\dcal$.
To estimate $u(\theta)$, a series of experiments of $\theta$ on $\{z_1, z_2,...,z_K\}$
could be run.
As presented in Definition~\ref{def:exsetting},
an experimental setting $S_{N}$ to estimate $u(\theta)$
is to run $\theta$ on $\{z_{1},...,z_{K}\}$ for $N$ times,
each time with a random seed sampled $i.i.d$ from $\gcal$.
\begin{definition}[Experimental Setting $S_{N}$]
\label{def:exsetting}
  Given a configuration $\theta$, a set of $K$ training instances $\{z_{1},...,z_{K}\}$ and 
  the total number $N$ of runs of $\theta$,
  an experimental setting $S_{N}$ to estimate $u(\theta)$
  is a list of $N$ tuples, in which each tuple $(z,v)$ consists
  of an instance $z$ and a random seed $v$, meaning a single run
  of $\theta$ with $v$ on $z$. Let $n_{i}$ denote the number of runs
  performed on $z_{i}$ (note $n_{i}$ could be 0, meaning $\theta$
  will not be run on $z_{i}$).
  It holds that $\sum_{i=1}^{K} n_{i}=N$
  and $S_{N}$ could be written as:
  \[
    \begin{aligned}
    S_{N}=[&(z_{1}, v_{1,1}),...,(z_{1},v_{1,n_{1}}),...,(z_{i}, v_{i,1}),...,\\
    &(z_{i},v_{i,n_{i}}),...,(z_{K}, v_{K,1}),...,(z_{K},v_{K,n_{K}})].
    \end{aligned}
  \]
\end{definition}
After performing the $N$ runs of $\theta$ as specified in $S_{N}$,
the utility scores of these runs are aggregated to 
estimate $u(\theta)$.
The following estimator $\hat{u}_{S_{N}}(\theta)$, which calculates the
mean utility across all runs and 
is widely adopted in AAC methods \cite{hutter2009paramils,lopez2016irace,hutter2011sequential},
is presented in Definition~\ref{def:estimator}.

\begin{definition}[Estimator $\hat{u}_{S_{N}}(\theta)$]
\label{def:estimator}
  Given a configuration $\theta$ and an experimental setting $S_{N}$,
  the training performance of $\theta$, which is 
  an estimate of $u(\theta)$, is given by:
  \[
  \label{eq:estimator}
    \hat{u}_{S_{N}}(\theta) = \frac{1}{N}\sum_{i=1}^{K}\sum_{j=1}^{n_{i}}f_{\theta}(z_{i}, v_{i, j}).
  \]
\end{definition}

Since different experimental settings represent different performance estimators,
which have different behaviors.
It is thus necessary to investigate which $S_N$ 
is the best.

\section{Universal Best Performance Estimator}
\label{section3}
To determine the values of $n_{1},...,n_{K}$ in $S_{N}$,
\citet{Birattari2004} analyzed expectations and variances of
$\hat{u}_{S_{N}}(\theta)$,
and concluded that $\hat{u}_{S^{\circ}_{N}}(\theta)$
with $S^{\circ}_{N}:=[(z_{1}, v_{1,1}),(z_{2},v_{2,1}),...,(z_{N},v_{N,1})]$
has the minimal variance.
It is noted that the analysis in \cite{Birattari2004}
assumes that infinite problem instances could be sampled from $\dcal$;
thus for performing $N$ runs of $\theta$, as specified in $S^{\circ}_{N}$,
it is always the best to
sample $N$ instances from $\dcal$ and perform one single run of $\theta$ on each
instance.
In other words, $S^{\circ}_{N}$ is established
on the basis that the number of the training
instances $K$ could always be set equal to $N$.
However, in practice usually we only have a finite number of training instances.
In the case that $K \not= N$, which $S_{N}$ is the 
best?
Theorem~\ref{theorem:best_estimator} answers this question for arbitrary relationship
between $K$ and $N$.
Before presenting Theorem~\ref{theorem:best_estimator},
some necessary definitions are introduced.

Given a configuration $\theta$ and an instance $z$,
the expected utility of $\theta$ within $z$,
denoted as $u_{z}(\theta)$, is $\ebb_{\gcal}[f_{\theta}(z,v)|z]$.
The variance of the utility of $\theta$ within $z$,
denoted as $\sigma^{2}_{z}(\theta)$,
is $\ebb_{\gcal}[(f_{\theta}(z,v)-u_{z}(\theta))^{2}|z]$.
Based on $u_{z}(\theta)$ and $\sigma^{2}_{z}(\theta)$,
the expected within-instance variance $\bar{\sigma}^{2}_{WI}(\theta)$ of $\theta$
and the across-instance variance $\bar{\sigma}^{2}_{AI}(\theta)$ of $\theta$
are defined in Definition~\ref{def:within_instance} and
Definition~\ref{def:across_instance}, respectively.
\begin{definition}[Expected within-instance Variance of $\theta$]
  $\bar{\sigma}^{2}_{WI}(\theta)$
  is the expected value of $\sigma^{2}_{z}(\theta)$ over instance distribution $\dcal$:
  \small
  \[
    \bar{\sigma}^{2}_{WI}(\theta) = \ebb_{\dcal}[\sigma^{2}_{z}(\theta)].
  \]
\label{def:within_instance}  
\end{definition}
\begin{definition}[Across-instance Variance of $\theta$]
  $\bar{\sigma}^{2}_{AI}(\theta)$
  is the variance of $u_{z}(\theta)$ over instance distribution $\dcal$:
  \small
  \[
    \bar{\sigma}^{2}_{AI}(\theta) = \ebb_{\dcal}[(u_{z}(\theta) - u(\theta))^{2}].
  \]
\label{def:across_instance}
\end{definition}
The expectation and the variance of an estimator $\hat{u}_{S_{N}}(\theta)$ are presented
in Lemma~\ref{lem:unbiased} and Lemma~\ref{lem:variance}, respectively.
The proofs are omitted here due to space limitations.

\begin{lemma}
\label{lem:unbiased}
 The expectation of $\hat{u}_{S_{N}}(\theta)$ is $u(\theta)$, that is,
 $\hat{u}_{S_{N}}(\theta)$ is an unbiased estimator of $u(\theta)$ no matter
 how $n_{1},...,n_{K}$ in $S_{N}$ are set:
 \small
 \[
 	 \ebb_{S_{N}}[\hat{u}_{S_{N}}(\theta)] = u(\theta).
 \] 
\end{lemma}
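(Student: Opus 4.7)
The plan is to prove unbiasedness by direct application of linearity of expectation, exploiting the fact that in the experimental setting $S_N$ the allocation counts $n_1,\dots,n_K$ are fixed (determined by the design) while the only sources of randomness are the training instances $z_1,\dots,z_K$ drawn i.i.d.\ from $\dcal$ and the seeds $v_{i,j}$ drawn i.i.d.\ from $\gcal$, independently across runs.

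First I would write out $\ebb_{S_N}[\hat{u}_{S_N}(\theta)]$ from Definition~\ref{def:estimator} and pass the expectation inside the double sum by linearity, pulling out the factor $1/N$. This reduces the claim to computing $\ebb[f_\theta(z_i, v_{i,j})]$ for a single index pair $(i,j)$. Because $z_i \sim \dcal$ and $v_{i,j} \sim \gcal$ are independent of one another (and of the allocation $n_1,\dots,n_K$, which is part of the design and not random), this expectation equals $\ebb_{z \sim \dcal, v \sim \gcal}[f_\theta(z,v)] = u(\theta)$ by definition.

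Finally I would collect terms: each of the $n_i$ inner-sum contributions on instance $z_i$ contributes $u(\theta)$ in expectation, so the double sum evaluates to $\sum_{i=1}^K n_i \cdot u(\theta)$, and then using the constraint $\sum_{i=1}^K n_i = N$ from Definition~\ref{def:exsetting} the prefactor $1/N$ cancels, giving exactly $u(\theta)$. Since this manipulation never uses the specific values of $n_1,\dots,n_K$, the conclusion holds for every valid $S_N$, which is precisely the ``no matter how $n_1,\dots,n_K$ are set'' part of the statement.

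There is no real obstacle here: the only point that deserves a brief sentence is the justification for treating $n_1,\dots,n_K$ as deterministic (they are part of the experimenter's protocol, not random draws), so that linearity of expectation applies cleanly and the weights $n_i$ come out of the expectation directly. Everything else is a one-line calculation.
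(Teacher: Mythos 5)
Your proof is correct: linearity of expectation plus the fact that each term $f_{\theta}(z_i,v_{i,j})$ has marginal expectation $\ebb_{z\sim\dcal,\,v\sim\gcal}[f_{\theta}(z,v)]=u(\theta)$ (no independence needed for linearity), together with $\sum_{i=1}^{K}n_i=N$, gives the result, and your remark that the $n_i$ are deterministic design choices is exactly the right point to flag. The paper omits its proof of this lemma, but your argument is clearly the intended one --- it is the same computation the paper invokes implicitly in the proof of Lemma~\ref{lem:concen} when it asserts $\ebb[x_{i,j}]=u(\theta)-\ebb[f_{\theta}(z_i,v_{i,j})]=0$.
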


\begin{lemma}
\label{lem:variance}
 The variance of $\hat{u}_{S_{N}}(\theta)$ is given by:
 \small
 \begin{equation}
 \label{eq:variance}
  	 \ebb_{S_{N}}[(\hat{u}_{S_{N}}(\theta)-u(\theta))^{2}] = \frac{1}{N} \bar{\sigma}^{2}_{WI}(\theta) + \frac{\Sigma_{i=1}^{K}n_{i}^{2}}{N^{2}} \bar{\sigma}^{2}_{AI}(\theta).
 \end{equation}
\end{lemma}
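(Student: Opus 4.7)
The strategy is to expand the square $(\hat{u}_{S_N}(\theta)-u(\theta))^2$, use linearity of expectation, and classify the resulting cross-terms by how the indices of the two runs relate. Writing $X_{i,j} := f_\theta(z_i, v_{i,j})$, so that $\hat{u}_{S_N}(\theta) = \frac{1}{N}\sum_{i,j} X_{i,j}$, I would compute $\ebb_{S_N}[\hat{u}_{S_N}(\theta)^2] - u(\theta)^2$ and massage it into the form of \eqref{eq:variance}.

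\textbf{Key steps.} First, expand
\[
\ebb_{S_N}[\hat{u}_{S_N}(\theta)^2]=\frac{1}{N^2}\sum_{i,j}\sum_{i',j'}\ebb_{S_N}[X_{i,j}X_{i',j'}],
\]
and split the double sum into three disjoint cases: (a) $i\neq i'$; (b) $i=i'$ but $j\neq j'$; (c) $i=i'$ and $j=j'$. Next, evaluate each case with the tower property, conditioning first on the instances and then on the seeds:
\begin{itemize}
\item[(a)] Since $z_i, z_{i'}$ are drawn independently from $\dcal$ and the seeds are independent across different runs, $X_{i,j}$ and $X_{i',j'}$ are independent, giving $\ebb[X_{i,j}X_{i',j'}]=u(\theta)^2$.
\item[(b)] Conditioning on $z_i$, the seeds $v_{i,j}$ and $v_{i,j'}$ are i.i.d.\ from $\gcal$, so $\ebb[X_{i,j}X_{i,j'}\mid z_i]=u_{z_i}(\theta)^2$, and taking the outer expectation yields $\ebb_\dcal[u_z(\theta)^2]$.
\item[(c)] $\ebb[X_{i,j}^2]=\ebb_\dcal[\ebb_\gcal[f_\theta(z,v)^2\mid z]]=\ebb_\dcal[u_z(\theta)^2+\sigma_z^2(\theta)]=\ebb_\dcal[u_z(\theta)^2]+\bar{\sigma}^2_{WI}(\theta)$.
\end{itemize}
Then count the number of index tuples in each case: $N^2-\sum_i n_i^2$ in (a), $\sum_i n_i^2 - N$ in (b), and $N$ in (c). Substituting and collecting coefficients gives
\[
\ebb_{S_N}[\hat{u}_{S_N}(\theta)^2]=u(\theta)^2+\frac{\sum_i n_i^2}{N^2}\bigl(\ebb_\dcal[u_z(\theta)^2]-u(\theta)^2\bigr)+\frac{1}{N}\bar{\sigma}^2_{WI}(\theta).
\]
Finally, recognize $\ebb_\dcal[u_z(\theta)^2]-u(\theta)^2=\ebb_\dcal[(u_z(\theta)-u(\theta))^2]=\bar{\sigma}^2_{AI}(\theta)$, since $\ebb_\dcal[u_z(\theta)]=u(\theta)$ by the tower property. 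Subtracting $u(\theta)^2$ then yields \eqref{eq:variance}.

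\textbf{Main obstacle.} The calculation is routine once the framework is set up; the only delicate point is the three-way case split and the correct combinatorial counts $(N^2-\sum_i n_i^2$, $\sum_i n_i^2-N$, $N)$, which must be consistent with the convention that some $n_i$ may be zero. Once the counts are right, the algebra collapses cleanly because the term $\ebb_\dcal[u_z(\theta)^2]$ appearing in cases (b) and (c) combines with $u(\theta)^2$ from case (a) precisely to produce $\bar{\sigma}^2_{AI}(\theta)$.
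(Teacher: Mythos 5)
Your proof is correct. The paper omits its own proof of this lemma "due to space limitations," but your three-way case split (distinct instances, same instance with distinct seeds, identical run) and the identification of $\ebb_\dcal[u_z(\theta)^2]-u(\theta)^2$ with $\bar{\sigma}^2_{AI}(\theta)$ is exactly the computation the authors carry out per-instance inside the proof of Lemma~\ref{lem:concen} (where they derive $Var[X_i]=n_i\bar{\sigma}^2_{WI}(\theta)+n_i^2\bar{\sigma}^2_{AI}(\theta)$), so this is essentially the intended argument and your combinatorial counts are right, including when some $n_i=0$.
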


\begin{theorem}
\label{theorem:best_estimator}
  Given a configuration $\theta$, a training set of $K$ instances and the total number $N$
  runs of $\theta$, the universal best estimator $\hat{u}_{S^{\ast}_{N}}(\theta)$
  for $u(\theta)$ is obtained by setting
  $S^{\ast}_{N}:= n_{i} \in \{\lfloor{\frac{N}{K}}\rfloor, \lceil{\frac{N}{K}}\rceil \} $
  for all $i \in \{1,2,...,K\}$, s.t. $\sum_{i=1}^{K} n_{i}=N$.
  $\hat{u}_{S^{\ast}_{N}}(\theta)$ is an unbiased estimator of $u(\theta)$ and
  is with the minimal variance among all possible estimators.
\end{theorem}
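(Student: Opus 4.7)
The plan is to combine the two preceding lemmas with a short discrete optimization argument. By Lemma~\ref{lem:unbiased}, every experimental setting $S_N$ yields an unbiased estimator of $u(\theta)$, so minimizing mean-squared error reduces to minimizing variance. By Lemma~\ref{lem:variance}, the variance of $\hat{u}_{S_N}(\theta)$ depends on the allocation $(n_1,\dots,n_K)$ only through the quantity $\sum_{i=1}^{K} n_i^2$, and the coefficient $\bar{\sigma}^2_{AI}(\theta)/N^2$ is non-negative. Hence the problem collapses to
\[
\min_{n_1,\dots,n_K \in \nbb} \sum_{i=1}^{K} n_i^2 \quad \text{subject to} \quad \sum_{i=1}^{K} n_i = N.
\]
Crucially, the solution to this integer program does not depend on $\theta$, which is what makes the optimal setting \emph{universal}.

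Next I would solve this minimization by a smoothing/exchange argument. Suppose some feasible allocation satisfies $n_i - n_j \geq 2$ for a pair of indices $i,j$. Replacing $(n_i,n_j)$ with $(n_i-1, n_j+1)$ preserves the sum constraint, and a direct expansion gives
\[
(n_i-1)^2 + (n_j+1)^2 - n_i^2 - n_j^2 \;=\; 2(n_j - n_i) + 2 \;\leq\; -2,
\]
so the objective strictly decreases. Iterating this swap (it terminates because the objective is bounded below and takes integer values) produces an allocation in which every pair satisfies $|n_i - n_j| \leq 1$. This condition is equivalent to $n_i \in \{\lfloor N/K\rfloor, \lceil N/K\rceil\}$ for all $i$, recovering exactly the setting $S^{\ast}_N$ in the theorem statement.

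Finally, I would tie the pieces together: any such balanced allocation achieves the minimum of $\sum_i n_i^2$, hence by Lemma~\ref{lem:variance} the minimum variance, and by Lemma~\ref{lem:unbiased} it remains unbiased. Therefore $\hat{u}_{S^{\ast}_N}(\theta)$ is an unbiased minimum-variance estimator of $u(\theta)$, and the optimality is universal in the sense that the prescription for $(n_1,\dots,n_K)$ does not depend on the unknown quantities $\bar{\sigma}^2_{WI}(\theta)$ and $\bar{\sigma}^2_{AI}(\theta)$.

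The argument is largely bookkeeping once the two lemmas are in hand; the only subtlety is ensuring the reduction to $\sum_i n_i^2$ really is $\theta$-free. Strictly speaking, when $\bar{\sigma}^2_{AI}(\theta) = 0$ every allocation ties for the minimum variance, so the balanced allocation is optimal but not uniquely so; I would flag this as a harmless degenerate case rather than a genuine obstacle, since the theorem claims $S^{\ast}_N$ attains the minimum, not that it is the unique minimizer. No additional assumptions beyond those already in place (bounded utility, finite $K$ and $N$) are needed.
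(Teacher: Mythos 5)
Your proposal is correct and follows essentially the same route as the paper: both reduce the problem via Lemma~\ref{lem:unbiased} and Lemma~\ref{lem:variance} to minimizing $\sum_{i=1}^{K}n_i^2$ subject to $\sum_{i=1}^{K}n_i=N$, and both resolve that integer program with the same exchange computation $(n_i-1)^2+(n_j+1)^2-n_i^2-n_j^2=2(n_j-n_i)+2\leq -2$ (the paper phrases it as a contradiction at a presumed minimizer, you as an iterative descent, which is an immaterial difference). Your remarks on $\theta$-independence of the reduction and on the degenerate case $\bar{\sigma}^2_{AI}(\theta)=0$ are correct and slightly more careful than the paper's own write-up.
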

\begin{proof}

By Lemma~\ref{lem:unbiased}, $\hat{u}_{S^{\ast}_{N}}(\theta)$ is an unbiased estimator of $u(\theta)$.
  We now prove $\hat{u}_{S^{\ast}_{N}}(\theta)$ has the minimal variance. By Lemma~\ref{lem:variance}, the variance of $\hat{u}_{S_{N}}(\theta)$ is $\frac{1}{N} \bar{\sigma}^{2}_{WI}(\theta) + \frac{\Sigma_{i=1}^{K}n_{i}^{2}}{N^{2}} \bar{\sigma}^{2}_{AI}(\theta)$.
  Since $N$ and $K$ are fixed, and $\bar{\sigma}^{2}_{WI}(\theta)$ and $\bar{\sigma}^{2}_{AI}(\theta)$ are constants for a given $\theta$, we need to minimize
  $\sum_{i=1}^{K}n_{i}^{2}$, s.t. $\sum_{i=1}^{K}n_{i}=N$.
  Define $Q_{n}=\sqrt{\sum_{k=1}^{K}n_{i}^{2}}$ and $\bar{n}=\frac{\sum_{i=1}^{K}n_{i}}{K}=\frac{N}{K}$,
  it then follows that
  $Q_{n}^{2}=K\bar{n}^{2}+\sum_{i=1}^{K}(n_{i}-\bar{n}^{2})$.
  Then it suffices to prove that $Q_{n}^{2}$ is minimized on the condition $n_{i} \in \{\lfloor{\frac{N}{K}}\rfloor, \lceil{\frac{N}{K}}\rceil \}$ for all $i \in \{1,2,...,K\}$.
  Assuming $Q_{n}^{2}$ is minimized while the condition not satisfied, then there must exist
  $n_{i}$ and $n_{j}$, such that $n_{i}-n_{j}>1$;
  then we have
  $(n_{i}-\bar{n})^{2}+(n_{j}-\bar{n})^{2} > (n_{i}-\bar{n})^{2}+(n_{j}-\bar{n})^{2}-2(n_{i}-n_{j})+2 = (n_{i}-\bar{n}-1)^{2}+(n_{j}-\bar{n}+1)^{2}$.
  This contradicts the assumption that $Q_{n}^{2}$ is minimized.
The proof is complete.
\end{proof}

Theorem~\ref{theorem:best_estimator} states that it is always the best
to distribute the $N$ runs of $\theta$ to all training instances
as evenly as possible, in which case $\max_{i,j \in \{1,...,K\}} |n_{i}-n_{j}| \leq 1$,
no matter $K \neq N$ or $K=N$.
When $K=N$, $S^{\ast}_{N}(\theta)$ is actually equivalent to  $S^{\circ}_{N}$
that performs one single run of $\theta$ on each instance.
When $K \neq N$, $S^{\ast}_{N}(\theta)$ will
perform $\lceil\frac{N}{K}\rceil$ runs of $\theta$ on each of $(N\ \mathrm{mod}\ K)$ instances
and perform $\lfloor\frac{N}{K}\rfloor$ runs on each of the rest instances.
It is worth mentoring that
practical AAC methods including ParamILS,
SMAC and irace actually
adopt the same or quite similar estimators as
$S^{\ast}_{N}(\theta)$.
Theorem~\ref{theorem:best_estimator} provides a theoretical guarantee
for these estimators, and
$S^{\ast}_{N}(\theta)$ will be further evaluated in the experiments.

\section{Bounds on Estimation Error}
Although Theorem~\ref{theorem:best_estimator} presents the estimator with the
universal minimal variance, it cannot provide any information about
how large the estimation error, i.e., $u(\theta)-\hat{u}_{S_{N}}(\theta)$, could be.
Bounds on estimation error are useful in both theory and practice because 
we could use them to establish bounds on the true performance $u(\theta)$,
given that in algorithm configuration process the training performance
$\hat{u}_{S_{N}}(\theta)$ is actually known.
In general,
given a configuration $\theta$,
its training performance $\hat{u}_{S_{N}}(\theta)$
is a random variable
because the training instances and the random seeds specified in $S_{N}$
are drawn from distributions $\dcal$ and $\gcal$, respectively.
Thus we focus on establishing probabilistic
inequalities for $u(\theta)-\hat{u}_{S_{N}}(\theta)$,
i.e., for any $0<\delta<1$, with probability at least $1-\delta$,
there holds $u(\theta)-\hat{u}_{S_{N}}(\theta) \leq A(\delta)$.
In particular, probabilistic bounds on uniform estimation error,
i.e., $\sup_{\theta \in \bTheta}[u(\theta)-\hat{u}_{S_{N}}(\theta)]$,
that hold for all $\theta \in \bTheta$ are established.
Recalling that Lemma~\ref{lem:unbiased} states 
$\ebb_{S_{N}}[\hat{u}_{S_{N}}(\theta)] = u(\theta)$,
the key technique for deriving bounds on 
$u(\theta)-\hat{u}_{S_{N}}(\theta)$ is the concentration
inequality presented in Lemma~\ref{lem:concen} that bounds how $\hat{u}_{S_{N}}(\theta)$
deviates from its expected value $u(\theta)$.

\begin{lemma}[Bernstein's Inequality \cite{bernstein1927theory}]
 Let $X_{1},X_{2},...,X_{n}$ be independent centered bounded random variables, i.e., $\mathrm{Prob}\{|X_{i}| \leq a\}=1$ and $\ebb[X_{i}]=0$. 
 Let $\sigma^{2}=\frac{1}{n}\sum_{i=1}^{n}Var[X_{i}]$ where $Var[X_{i}]$ is the variance of $X_{i}$.
 Then for any $\epsilon>0$ we have
 \[
   \mathrm{Prob}\{\frac{1}{n}\sum_{i=1}^{n}X_{i} \geq \epsilon\} \leq \mathrm{exp}(-\frac{n\epsilon^{2}}{2\sigma^{2}+\frac{2a\epsilon}{3}}).
 \]
\label{lem:berstein}
\end{lemma}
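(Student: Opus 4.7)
The plan is to apply the classical Chernoff exponential-moment method: convert the tail probability into a product of moment generating functions (MGFs) using Markov's inequality and independence, bound each MGF sharply, and then optimize the free parameter. Specifically, for any $\lambda > 0$,
\[
\mathrm{Prob}\Bigl\{\tfrac{1}{n}\sum_{i=1}^{n} X_{i} \geq \epsilon\Bigr\} = \mathrm{Prob}\bigl\{e^{\lambda \sum_{i} X_{i}} \geq e^{\lambda n \epsilon}\bigr\} \leq e^{-\lambda n \epsilon}\prod_{i=1}^{n}\ebb\bigl[e^{\lambda X_{i}}\bigr],
\]
where the last inequality uses Markov and independence. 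The task then reduces to a per-variable MGF bound.

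For each $i$, I would expand $e^{\lambda X_{i}}$ as a power series. Since $\ebb[X_{i}]=0$ and $|X_{i}|\leq a$ a.s., the higher moments are controlled by the variance via $\ebb[X_{i}^{k}] \leq a^{k-2}\mathrm{Var}[X_{i}]$ for all $k\geq 2$. Summing and using $1+x\leq e^{x}$ yields
\[
\ebb\bigl[e^{\lambda X_{i}}\bigr] \leq \exp\Bigl(\tfrac{\mathrm{Var}[X_{i}]}{a^{2}}(e^{\lambda a}-1-\lambda a)\Bigr).
\]
The crucial calibrating step is the elementary inequality $e^{u}-1-u \leq \frac{u^{2}/2}{1-u/3}$ for $0\leq u<3$, which follows by comparing $\sum_{k\geq 2} u^{k-2}/k!$ with the geometric series $\sum_{k\geq 2} u^{k-2}/(2\cdot 3^{k-2})$ (using $k! \geq 2\cdot 3^{k-2}$). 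Applying it with $u=\lambda a$ (valid whenever $\lambda a < 3$) and multiplying across $i$ produces
\[
\mathrm{Prob}\Bigl\{\tfrac{1}{n}\sum_{i=1}^{n} X_{i} \geq \epsilon\Bigr\} \leq \exp\Bigl(-\lambda n \epsilon + \tfrac{n\sigma^{2}\lambda^{2}/2}{1-\lambda a/3}\Bigr).
\]

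Finally I would optimize the exponent in $\lambda$. Setting $\lambda = \epsilon/(\sigma^{2} + a\epsilon/3)$ (which automatically satisfies $\lambda a/3 < 1$) and simplifying $1-\lambda a/3 = \sigma^{2}/(\sigma^{2}+a\epsilon/3)$ makes the two terms in the exponent collapse to exactly $-n\epsilon^{2}/(2\sigma^{2}+2a\epsilon/3)$, yielding the claimed bound. The main obstacle is the MGF estimate: getting the precise constant $1/3$ in the Bernstein denominator depends on the geometric-series comparison above rather than the weaker Gaussian-type estimate $e^{u}-1-u \leq u^{2}e^{|u|}/2$ that underlies Hoeffding-style inequalities. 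Once that calibration is in place, the subsequent choice of $\lambda$ and the algebraic simplification are essentially forced.
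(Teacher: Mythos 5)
Your proof is correct. Note that the paper itself offers no proof of this lemma: it is stated as a classical result and attributed to Bernstein (1927), so there is no in-paper argument to compare against. What you have written is the standard Chernoff-method derivation of Bernstein's inequality, and every step checks out: the Markov/independence factorization, the moment bound $\ebb[X_i^k]\leq a^{k-2}\mathrm{Var}[X_i]$ for $k\geq 2$, the MGF estimate via $1+x\leq e^x$, the comparison $e^u-1-u\leq \frac{u^2/2}{1-u/3}$ from $k!\geq 2\cdot 3^{k-2}$, and the choice $\lambda=\epsilon/(\sigma^2+a\epsilon/3)$, which indeed collapses the exponent to exactly $-n\epsilon^2/(2\sigma^2+\tfrac{2a\epsilon}{3})$. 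You correctly identify the one genuinely delicate point, namely that the $1/3$ in the denominator requires the geometric-series calibration rather than a cruder Gaussian-type MGF bound. The only cosmetic caveat is the degenerate case $\sigma^2=0$, where your choice of $\lambda$ gives $\lambda a/3=1$; but there all $X_i$ vanish almost surely and the claimed probability is $0$, so the bound holds trivially.
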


\begin{lemma}
  Given a configuration $\theta$, an experimental setting
  $S_{N}=[(z_{1}, v_{1,1}),...,(z_{K},v_{K,n_{K}})]$
  and a performance estimator $\hat{u}_{S_{N}}(\theta)=\frac{1}{N}\sum_{i=1}^{K}\sum_{j=1}^{n_{i}}f_{\theta}(z_{i}, v_{i, j})$.
  Let $\tau_{\theta}^{2}=\bar{\sigma}^{2}_{WI}(\theta) + \frac{\sum_{i=1}^{K}n_{i}^{2}}{N}\bar{\sigma}^{2}_{AI}(\theta)$.
  Let $C=U-L$, where $L,U$ are the lower bound and the upper
  bound of $f_{\theta}$ respectively (see Definition~\ref{def:AAC_definition}),
  and let $n=\max\{n_{1},n_{2},...,n_{K}\}$.
  Then for any $\epsilon>0$, we have
  \[
    \mathrm{Prob}\{u(\theta) - \hat{u}_{S_{N}}(\theta)
    \geq \epsilon \}
    \leq \mathrm{exp}(-\frac{N\epsilon^{2}}{2\tau_{\theta}^{2}+\frac{2nC\epsilon}{3}}).
  \]
\label{lem:concen}
\end{lemma}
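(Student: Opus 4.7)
The plan is to apply Bernstein's inequality (Lemma~\ref{lem:berstein}) to a coarser grouping of the $N$ runs than individual draws, so that the summands are genuinely independent. The natural grouping is by instance: for each $i \in \{1,\dots,K\}$, define
\[
  T_i = \sum_{j=1}^{n_i}\bigl(f_\theta(z_i,v_{i,j}) - u(\theta)\bigr),
\]
so that $u(\theta) - \hat{u}_{S_N}(\theta) = -\tfrac{1}{N}\sum_{i=1}^{K} T_i$. Because the pairs $(z_i,\{v_{i,j}\}_{j=1}^{n_i})$ are drawn independently across $i$ (the $z_i$'s are i.i.d.\ from $\dcal$ and the seeds are i.i.d.\ from $\gcal$), the $T_i$ are mutually independent, which is exactly what Bernstein requires. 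Note that we cannot treat the individual terms $f_\theta(z_i,v_{i,j}) - u(\theta)$ as independent when $n_i \geq 2$, since they share the common instance $z_i$; this is the whole reason for grouping.

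Next, I would verify the three inputs that Bernstein needs. Centering: $\ebb[T_i] = 0$ is immediate from $\ebb[f_\theta(z_i,v_{i,j})] = u(\theta)$. Boundedness: since $f_\theta \in [L,U]$ forces $|f_\theta(z,v) - u(\theta)| \leq C$, we get $|T_i| \leq n_i C \leq n C$. Variance: this is the one computation that needs care. Writing $f_\theta(z_i,v_{i,j}) - u(\theta) = \bigl(f_\theta(z_i,v_{i,j}) - u_{z_i}(\theta)\bigr) + \bigl(u_{z_i}(\theta) - u(\theta)\bigr)$ and expanding $\ebb[T_i^2]$, the diagonal terms contribute $n_i(\bar\sigma^2_{WI}(\theta)+\bar\sigma^2_{AI}(\theta))$ (the cross term in the decomposition vanishes after taking $\ebb_{\gcal}[\,\cdot\,|\,z_i]$), while the $n_i(n_i-1)$ off-diagonal pairs, upon conditioning on $z_i$ and using independence of $v_{i,j}$ and $v_{i,j'}$, each contribute $\ebb_\dcal[(u_{z_i}(\theta)-u(\theta))^2] = \bar\sigma^2_{AI}(\theta)$. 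Summing yields
\[
  \mathrm{Var}(T_i) = n_i\bar\sigma^2_{WI}(\theta) + n_i^2\bar\sigma^2_{AI}(\theta).
\]

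With these ingredients in hand, apply Lemma~\ref{lem:berstein} to the $K$ independent centered variables $X_i = -T_i$ with bound $a = nC$ and averaged variance $\sigma^2 = \frac{1}{K}\sum_{i=1}^{K}\mathrm{Var}(T_i) = \frac{N}{K}\bar\sigma^2_{WI}(\theta) + \frac{\sum_i n_i^2}{K}\bar\sigma^2_{AI}(\theta)$. The event $\{u(\theta) - \hat{u}_{S_N}(\theta) \geq \epsilon\}$ is the event $\{\tfrac{1}{K}\sum_i X_i \geq N\epsilon/K\}$, so choose $\epsilon' = N\epsilon/K$ in Bernstein. The resulting exponent is $-\frac{K(N\epsilon/K)^2}{2\sigma^2 + 2nC(N\epsilon/K)/3}$; multiplying numerator and denominator by $K/N$ and recognizing $\bar\sigma^2_{WI}(\theta) + \frac{\sum_i n_i^2}{N}\bar\sigma^2_{AI}(\theta) = \tau_\theta^2$ collapses the bound to $\exp\bigl(-N\epsilon^2/(2\tau_\theta^2 + 2nC\epsilon/3)\bigr)$, as claimed.

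The main obstacle is the variance calculation for $T_i$: one must correctly separate the conditional-on-$z_i$ randomness from the instance-level randomness and not double-count the cross terms. Once $\mathrm{Var}(T_i) = n_i\bar\sigma^2_{WI}(\theta) + n_i^2\bar\sigma^2_{AI}(\theta)$ is established, the rest is bookkeeping: matching the $\frac{1}{n}$-normalization in the statement of Bernstein's inequality to the $\frac{1}{N}$-normalization in $\hat{u}_{S_N}(\theta)$ via the rescaling $\epsilon' = N\epsilon/K$, and using $\max_i n_i = n$ to control the range.
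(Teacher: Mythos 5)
Your proposal is correct and follows essentially the same route as the paper's proof: group the runs by instance so that the $K$ instance-level sums are independent, verify centering and the bound $nC$, compute $\mathrm{Var}(T_i)=n_i\bar{\sigma}^{2}_{WI}(\theta)+n_i^{2}\bar{\sigma}^{2}_{AI}(\theta)$ via the within-/across-instance decomposition, and apply Bernstein's inequality with the rescaling $\epsilon'=N\epsilon/K$. The only difference is an immaterial sign convention in defining the summands.
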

\begin{proof}
 Define random variables $x_{i,j}=u(\theta)-f_{\theta}(z_{i}, v_{i,j})$,
 and define random variables $X_{i}=\sum_{j=1}^{n_{i}}x_{i,j}$.
 First we prove that $X_{1},...,X_{K}$ satisfy the conditions in Lemma~\ref{lem:berstein}.
 $\ebb[X_{i}]=\sum_{j=1}^{n_{i}}\ebb[x_{i,j}]=\sum_{j=1}^{n_{i}}[u(\theta)-\ebb[f_{\theta}(z_{i},v_{i,j})]]=0$.
 By Definition~\ref{def:AAC_definition},
 $\mathrm{Prob}\{L \leq f_{\theta}(z_{i},v_{i,j}) \leq U\}=1$,
 it holds that $L \leq u(\theta) \leq U$ (since $u(\theta) = \ebb[f_{\theta}(z_{i},v_{i,j})]$).
 Thus we have $\mathrm{Prob}\{|x_{i,j}| \leq U-L\}=1$
 and
 $\mathrm{Prob}\{|X_{i}| \leq n(U-L) \}=1$.
 For any $p \neq q$,
 $X_{p}$ and $X_{q}$ are independent.
 Thus $X_{1},X_{2},...,X_{K}$ are independent random variables.

 Let $\bar{X}=\frac{1}{K}\sum_{i=1}^{K}X_{i}$.
 By Lemma~\ref{lem:berstein}, it holds that, for any $\epsilon>0$,
 $\mathrm{Prob}\{\bar{X} > \epsilon\} \leq \mathrm{exp}(-\frac{K\epsilon^{2}}{2\sigma^{2}+\frac{2C\epsilon}{3}})$,
 where $\sigma^{2}=\frac{1}{K}\sum_{i=1}^{K}Var[X_{i}]$.
 Notice that $\frac{K}{N}\bar{X}=u(\theta) - \hat{u}_{S_{N}}$;
 thus it holds that for any $\epsilon>0$,
 \begin{equation}
 \label{eq:midresult1}
  \mathrm{Prob}\{u(\theta) - \hat{u}_{S_{N}}>\epsilon\} \leq \mathrm{exp}(-\frac{N\epsilon^{2}}{\frac{2K}{N}\sigma^{2}+\frac{2nC\epsilon}{3}}).
 \end{equation} 
  The rest of the proof focuses on $\sigma^{2}$.
  Since $E[x_{i}]=0$,
  $Var[X_{i}]=\ebb[(X_{i}-E[X_{i}])^{2}]=\ebb[X_{i}^{2}]$.
  Substitute $X_{i}$ with $\sum_{j=1}^{n_{i}}x_{i,j}$ and 
  we have
  $Var[X_{i}]=\sum_{j=1}^{n_{i}}\ebb[x_{i,j}^{2}]
  +\sum_{1 \leq j < l \leq n_{i}}2\ebb[x_{i,j}x_{i,l}]$.
  We analyze $\ebb[x_{i,j}^{2}]$ and $\ebb[x_{i,j}x_{i,l}]$ in turn.
  $\ebb[x_{i,j}^{2}]=Var[x_{i,j}]+\ebb[x_{i,j}]^{2}=Var[x_{i,j}]+0=\bar{\sigma}^{2}_{WI}(\theta)+\bar{\sigma}^{2}_{AI}(\theta)$ (by setting $N,K=1$ in Eq.~(\ref{eq:variance})).
  $\ebb[x_{i,j}x_{i,l}]=\ebb[(f_{\theta}(z_{i},v_{i,j})-u(\theta))(f_{\theta}(z_{i},v_{i,l})-u(\theta))]=\ebb[(f_{\theta}(z_{i},v_{i,j})(f_{\theta}(z_{i},v_{i,l})]-u(\theta)^{2}$.
  Given an instance $z_{i}$, $f_{\theta}(z_{i},v_{i,j})$ and $f_{\theta}(z_{i},v_{i,l})$ are independent because $v_{i,j}$ and $v_{i,l}$ are sampled $i.i.d$ from $\gcal$.
Thus it holds that:
\small
\begin{align}
&\ebb[(f_{\theta}(z_{i},v_{i,j})(f_{\theta}(z_{i},v_{i,l})]
= \ebb_{\dcal}[\ebb_{\gcal} [f_{\theta}(z_{i},v_{i,j}) f_{\theta}(z_{i},v_{i,l}) |z_i]] \nonumber \\
&= \ebb_{\dcal}[\ebb_{\gcal} [f_{\theta}(z_{i},v_{i,j})|z_{i}]
\ebb_{\gcal} [f_{\theta}(z_{i},v_{i,l})|z_{i}]]
=\ebb_{\dcal}[u_{z_{i}}(\theta)^{2}]. \nonumber
\end{align}
By the fact
$\ebb_{\dcal}[u_{z}(\theta)]=u(\theta)$,
$\ebb_{\dcal}[u_{z_i}(\theta)^{2}] - u(\theta)^{2}
=\ebb_{\dcal}[(u_{z_i}(\theta)-u(\theta))^{2}]=\bar{\sigma}^{2}_{AI}(\theta)$.
The last step is by Definition~\ref{def:across_instance}.
 
Summing up the above results, we have
  $Var[X_{i}]=n_{i}(\bar{\sigma}^{2}_{WI}(\theta)+\bar{\sigma}^{2}_{AI}(\theta))+n_{i}(n_{i}-1)\bar{\sigma}^{2}_{AI}(\theta)
  =n_{i}\bar{\sigma}^{2}_{WI}(\theta) + n_{i}^{2}\bar{\sigma}^{2}_{AI}(\theta)$.
  Thus $\sigma^{2}=\frac{1}{K}\sum_{i=1}^{K}Var[X_{i}]=\frac{N}{K}\bar{\sigma}^{2}_{WI}(\theta) + \frac{\sum_{i=1}^{K}n_{i}^{2}}{K}\bar{\sigma}^{2}_{AI}(\theta)$.
  Substitute $\sigma^{2}$ in Eq.~(\ref{eq:midresult1}) with this result and the proof is complete.
\end{proof}

\subsection{On Configuration Space with Finite Cardinality}
Theorem~\ref{theorem:finite} presents the
bound for uniform estimation error
when $\bTheta$ is of finite cardinality.

\begin{theorem}
  \label{theorem:finite}
   Given a performance estimator $\hat{u}_{S_{N}}(\theta)$.
   Let $\theta^{\dagger}=\argmax_{\theta \in \bTheta}\tau_{\theta}^{2}$,
   where $\tau_{\theta}^{2} = \bar{\sigma}^{2}_{WI}(\theta) + \frac{\sum_{i=1}^{K}n_{i}^{2}}{N}\bar{\sigma}^{2}_{AI}(\theta)$,
   and let
   $\tau^{2}=\tau_{\theta^{\dagger}}^{2}$,
   $\bar{\sigma}^{2}_{WI}=\bar{\sigma}^{2}_{WI}(\theta^{\dagger})$
   and 
   $\bar{\sigma}^{2}_{AI}=\bar{\sigma}^{2}_{AI}(\theta^{\dagger})$.
   Let $n=\max\{n_{1},n_{2},...,n_{K}\}$
   and $C=U-L$.
   Given that $\bTheta$ is of finite cardinality, i.e.,
   $\bTheta=\{\theta_{1},\theta_{2},...,\theta_{m}\}$,
   then for any $0<\delta<1$, with probability at least $1-\delta$,
   there holds:
   \small
   \begin{align}
   \label{eq:finite}
    \sup_{\theta \in \bTheta}&[u(\theta)-\hat{u}_{S_{N}}(\theta)] \nonumber \\
    &\leq
    \frac{2nC\ln{\frac{m}{\delta}}}{3N}
    +
    \sqrt{2\ln{\frac{m}{\delta}}(\frac{1}{N}\bar{\sigma}^{2}_{WI} + \frac{\sum_{i=1}^{K}n^{2}_{i}}{N^{2}}\bar{\sigma}^{2}_{AI})}.
   \end{align}   
\end{theorem}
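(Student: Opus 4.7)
The plan is to derive the uniform bound from Lemma~\ref{lem:concen} by a union bound argument, followed by inverting the resulting Bernstein-type tail inequality to express the deviation as a function of the confidence level $\delta$. First I would fix an arbitrary $\theta \in \bTheta$ and invoke Lemma~\ref{lem:concen}; since $\tau_{\theta}^{2} \leq \tau^{2}$ by definition of $\theta^{\dagger}$, and since the right-hand side of the tail bound is increasing in $\tau_{\theta}^{2}$, I can uniformly replace $\tau_{\theta}^{2}$ by $\tau^{2}$ to obtain a single exponential bound of the form $\exp\bigl(-N\epsilon^{2}/(2\tau^{2}+2nC\epsilon/3)\bigr)$ that applies to every configuration.

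Next I would apply a union bound over the $m$ configurations in $\bTheta$, which yields
\[
\mathrm{Prob}\Bigl\{\sup_{\theta \in \bTheta}[u(\theta)-\hat{u}_{S_{N}}(\theta)] \geq \epsilon\Bigr\} \leq m \exp\Bigl(-\frac{N\epsilon^{2}}{2\tau^{2}+\frac{2nC\epsilon}{3}}\Bigr).
\]
Setting this probability equal to $\delta$ gives a quadratic equation in $\epsilon$, namely $N\epsilon^{2} - \tfrac{2nC\ln(m/\delta)}{3}\epsilon - 2\tau^{2}\ln(m/\delta) = 0$. Solving for the positive root with the quadratic formula produces
\[
\epsilon = \frac{nC\ln(m/\delta)}{3N} + \frac{1}{N}\sqrt{\Bigl(\frac{nC\ln(m/\delta)}{3}\Bigr)^{2} + 2N\tau^{2}\ln(m/\delta)}.
\]

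Finally, I would use the elementary inequality $\sqrt{a+b} \leq \sqrt{a} + \sqrt{b}$ on the square-root term to decouple the two contributions, combine the resulting linear terms in $nC\ln(m/\delta)/(3N)$, and substitute the expression $\tau^{2}/N = \bar{\sigma}^{2}_{WI}/N + (\sum_{i=1}^{K}n_{i}^{2}/N^{2})\bar{\sigma}^{2}_{AI}$ inside the remaining square root. This yields precisely the bound claimed in Equation~(\ref{eq:finite}), holding with probability at least $1-\delta$.

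The main obstacle is less conceptual than bookkeeping: carefully inverting the Bernstein-type expression (which mixes a variance term and a linear-in-$\epsilon$ term in the denominator of the exponent) and then relaxing the quadratic-formula solution via $\sqrt{a+b} \leq \sqrt{a}+\sqrt{b}$ in a way that preserves the clean additive decomposition between the $1/N$ variance rate and the $1/N$ range-based term. The union bound and the use of $\tau_{\theta}^{2} \leq \tau^{2}$ are routine, but one should verify that the monotonicity of the tail in $\tau_{\theta}^{2}$ allows the uniform substitution before—rather than after—the union bound, so that a single explicit $\epsilon(\delta)$ works simultaneously for all $\theta \in \bTheta$.
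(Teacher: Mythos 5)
Your proposal is correct and follows essentially the same route as the paper's proof: Lemma~\ref{lem:concen} plus the bound $\tau_{\theta}^{2}\leq\tau^{2}$, a union bound over the $m$ configurations, inversion of the resulting tail via the quadratic formula, and the relaxation $\sqrt{a+b}\leq\sqrt{a}+\sqrt{b}$ to merge the two $nC\ln(m/\delta)/(3N)$ terms into the stated $2nC\ln(m/\delta)/(3N)$. Whether the substitution $\tau_{\theta}^{2}\leq\tau^{2}$ is performed before or after the union bound is immaterial, since the tail is monotone in $\tau_{\theta}^{2}$ either way.
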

\begin{proof}
  By Lemma~\ref{lem:concen}, for a 
  given configuration $\theta$,
  for any $\epsilon>0$, it holds that
  $
    \mathrm{Prob}\{u(\theta) - \hat{u}_{S_{N}}(\theta)
    \geq \epsilon \}
    \leq \mathrm{exp}(-\frac{N\epsilon^{2}}{2\tau_{\theta}^{2}+\frac{2nC\epsilon}{3}})
  $.
  By union bound,
  $
  \mathrm{Prob}\{\sup_{\theta \in \bTheta}[u(\theta) - \hat{u}_{S_{N}}(\theta)]
    \geq \epsilon \}
    \leq \sum_{i=1}^{m}
    \mathrm{Prob}\{u(\theta_{i}) - \hat{u}_{S_{N}}(\theta_{i})
    \geq \epsilon\}
    \leq
    m \mathrm{exp}(-\frac{N\epsilon^{2}}{2\tau^{2}+\frac{2nC\epsilon}{3}}).
  $
  Let $\delta=m \mathrm{exp}(-\frac{N\epsilon^{2}}{2\tau^{2}+\frac{2nC\epsilon}{3}})    
  $,
  and $\epsilon$ is solved as:
  $\epsilon=\frac{1}{2N}
  [\frac{2nC}{3}\ln{\frac{m}{\delta}}
  + \sqrt{(\frac{2nC}{3}\ln{\frac{1}{\delta})^{2}
  +8N\tau^{2}\ln{\frac{m}{\delta}}
  }}]
  \leq
  \frac{2nC\ln{\frac{m}{\delta}}}{3N}
	+
	\sqrt{2\ln{\frac{m}{\delta}}(\frac{\tau^{2}}{N})}.
   $
   Substituting $\tau^{2}$ with
   $\bar{\sigma}^{2}_{WI} + \frac{\sum_{i=1}^{K}n_{i}^{2}}{N}\bar{\sigma}^{2}_{AI}$
   proves Theorem~\ref{theorem:finite}.
\end{proof}

Note that for different $S_{N}$,
the bounds on the right side of
Eq.~(\ref{eq:finite}) are different.
The proof of Theorem~\ref{theorem:best_estimator}
shows that $\sum_{i=1}^{K}n_{i}^{2}$, s.t. $\sum_{i=1}^{K}n_{i}=N$,
is minimized on the condition $n_{i} \in \{\lfloor{\frac{N}{K}}\rfloor, \lceil{\frac{N}{K}}\rceil \}$ for all $i \in \{1,2,...,K\}$.
Moreover, it is easy to verify that
$n=\max\{n_{1},n_{2},...,n_{K}\}$
is also minimized on the same condition,
in which case $n=\lceil{\frac{N}{K}\rceil}$.
Thus we can immediately obtain 
Corollary~\ref{cor:finite}.

\begin{corollary}
  \label{cor:finite}
   The estimator
  $\hat{u}_{S^{\ast}_{N}}$ established
  in Theorem~\ref{theorem:best_estimator},
  has the best bound for uniform estimation 
  error in Theorem~\ref{theorem:finite}.
  Given that $K$ divides $N$, 
  for any $0<\delta<1$, with probability at least $1-\delta$, there holds:
  \small
   \begin{equation*}
    \sup_{\theta \in \bTheta}[u(\theta)-\hat{u}_{S_{N}^{\ast}}(\theta)] \leq
    \frac{2C\ln{\frac{m}{\delta}}}{3K}
    +
    \sqrt{2\ln{\frac{m}{\delta}}(\frac{1}{N}\bar{\sigma}^{2}_{WI} + \frac{1}{K}\bar{\sigma}^{2}_{AI})}.
    \end{equation*}
\end{corollary}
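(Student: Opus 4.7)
The plan is to reduce the corollary to two observations about how the bound in Theorem~\ref{theorem:finite} depends on the choice of $S_N$, followed by a routine substitution for the divisible case. First I would isolate the pieces of the right-hand side of Eq.~(\ref{eq:finite}) that actually depend on $n_1,\dots,n_K$. There are exactly two: the quantity $n=\max_i n_i$ appearing linearly in the first term, and the quantity $\sum_{i=1}^{K} n_i^2$ appearing under the square root in the second term. Both terms are non-decreasing in these quantities, so to minimize the bound it suffices to simultaneously minimize $n$ and $\sum_{i=1}^{K} n_i^2$ subject to $\sum_i n_i=N$.

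Second, I would invoke the argument already carried out inside the proof of Theorem~\ref{theorem:best_estimator}: among all nonnegative integer tuples $(n_1,\dots,n_K)$ with $\sum_i n_i=N$, the sum of squares $\sum_i n_i^2$ is minimized exactly when each $n_i \in \{\lfloor N/K \rfloor, \lceil N/K \rceil\}$. For the maximum, a short pigeonhole observation shows that $\max_i n_i \ge \lceil N/K \rceil$ for any feasible allocation, with equality attained by the balanced one. Hence the allocation $S^{\ast}_N$ minimizes both quantities at once, and therefore achieves the best bound in Theorem~\ref{theorem:finite}.

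Third, for the special case where $K$ divides $N$, I would simply substitute $n_i = N/K$ into the bound. This gives $n = N/K$ and $\sum_{i=1}^{K} n_i^2 = K(N/K)^2 = N^2/K$. Plugging these in produces $\tfrac{2(N/K)C\ln(m/\delta)}{3N} = \tfrac{2C\ln(m/\delta)}{3K}$ for the linear term and $\tfrac{1}{N}\bar{\sigma}^{2}_{WI} + \tfrac{N^2/K}{N^2}\bar{\sigma}^{2}_{AI} = \tfrac{1}{N}\bar{\sigma}^{2}_{WI} + \tfrac{1}{K}\bar{\sigma}^{2}_{AI}$ inside the square root, which matches the displayed expression in the corollary.

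The main obstacle here is essentially absent: the corollary is a monotonicity-plus-substitution exercise on top of Theorem~\ref{theorem:finite}, reusing the minimization argument already established for Theorem~\ref{theorem:best_estimator}. The only point that merits care is verifying explicitly that a single allocation, namely the balanced one, simultaneously minimizes both $\max_i n_i$ and $\sum_i n_i^2$, so that no tradeoff between the two terms of the bound ever needs to be considered.
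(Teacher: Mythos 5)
Your proposal is correct and follows essentially the same route as the paper: reuse the minimization of $\sum_{i=1}^{K}n_i^2$ from the proof of Theorem~\ref{theorem:best_estimator}, observe that $n=\max_i n_i$ is minimized by the same balanced allocation (so both $n$-dependent terms of Eq.~(\ref{eq:finite}) are minimized simultaneously), and substitute $n_i=N/K$ when $K$ divides $N$. Your explicit pigeonhole justification that $\max_i n_i\geq\lceil N/K\rceil$ is a small addition of rigor over the paper's ``it is easy to verify,'' but the argument is otherwise identical.
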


\subsection{On Configuration Space with Infinite Cardinality}
Since in practice the cardinality of $\bTheta$ could be considerably
large (e.g., $10^{12}$),
in which case the bound provided
by Theorem~\ref{theorem:finite}
could be very loose.
Moreover, when the cardinality of $\bTheta$
is infinite, Theorem~\ref{theorem:finite}
does not apply anymore.
To address these issues,
we establish
new uniform error bound without dependence
on the cardinality of $\bTheta$
based on two mild assumptions given below.

\begin{assumption}\label{ass}
  \begin{enumerate}[(a)]
    \item We assume there exists $R>0$ such that $\bTheta\subseteq B_R$, where $B_{R}=\{\bw\in\rbb^h:\|\bw\|_2\leq R\}$ is a ball of radius $R$ and $\|\bw\|_2=\sum_{i=1}^{h}w_i^2$ for $\bw=(w_1,\ldots,w_h)$.
    \item We assume for any $(z,v) \in \zcal \times \vcal$, the utility function $f$ is L-Lipschitz continuous, i.e., $|f_{\theta}(z,v)-f_{\tilde{\theta}}(z,v)| \leq L||\theta -\tilde\theta||_{2}$ for all $\theta,\tilde{\theta}\in\bTheta$.
  \end{enumerate}  
\end{assumption}

Part (a) of Assumption \ref{ass} means
the ranges of the values of all parameters considered are bounded,
which holds in
nearly all practical algorithm configuration scenarios \cite{HutterLFLHLS14}.
Part (b) of Assumption \ref{ass}
poses limitations on how fast
$f_{\theta}$ can change
across $\bTheta$.
This assumption is also mild in
the sense that it is expected
that configurations with similar
parameter values would result in
similar behaviors of $\acal$,
thus getting similar performances.
The key technique for deriving the new
bound is \textit{covering numbers} as defined in Definition~\ref{def:covering_numers}.

\begin{definition}
\label{def:covering_numers}
Let $\fcal$ be a set and $d$ be a metric.
For any $\eta>0$,
a set $\fcal^\triangle \subset \fcal$
is called an $\eta$-cover of $\fcal$
if for every $f \in \fcal$ there
exists an element $g \in \fcal^\triangle$
satisfying $d(f,g) \leq \eta$.
The covering number $\ncal(\eta,\fcal, d)$
is the cardinality of the minimal
$\eta$-cover of $\fcal$:
\[
    \ncal(\eta,\fcal,d):=\min\{|\fcal^\triangle|:\fcal^\triangle\text{ is an $\epsilon$-cover of }\fcal\}.
  \]
\end{definition}

Lemma~\ref{lem:bound_on_cover_B} presents a covering number bound on $B_R$.

\begin{lemma}[\cite{pisier1999volume}]
\label{lem:bound_on_cover_B}
  \[\ln\ncal(\eta,B_R,d_2)\leq h\ln(3R/\eta),\]
  where $d_2(\bw,\tilde{\bw})=\|\bw-\tilde{\bw}\|_2$.
\end{lemma}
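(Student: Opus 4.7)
The plan is to establish this bound via the standard packing/volume argument for Euclidean balls. First I would convert the covering-number question into a packing question. Let $P = \{\bw_1,\ldots,\bw_M\} \subseteq B_R$ be a maximal $\eta$-separated subset, meaning $\|\bw_i - \bw_j\|_2 > \eta$ for all $i \neq j$ and no further point of $B_R$ can be adjoined while preserving this property. By maximality, every $\bw \in B_R$ must lie within $\eta$ of some $\bw_i$; otherwise $\bw$ could be added to $P$. Hence $P$ is itself an $\eta$-cover of $B_R$, and therefore $\ncal(\eta, B_R, d_2) \leq M$.

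Next I would bound $M$ by a ratio-of-volumes calculation. Since the points of $P$ are pairwise more than $\eta$ apart, the open balls $B(\bw_i, \eta/2)$ of radius $\eta/2$ centered at the $\bw_i$ are pairwise disjoint. Each such ball is also contained in the enlarged ball $B_{R+\eta/2}$ centered at the origin, because $\bw_i \in B_R$. Writing $V_h$ for the volume of the unit Euclidean ball in $\rbb^h$ and using disjointness together with containment,
\[
M\,(\eta/2)^h V_h \;\leq\; (R+\eta/2)^h V_h,
\]
which rearranges to $M \leq (1 + 2R/\eta)^h$, yielding $\ln\ncal(\eta, B_R, d_2) \leq h\ln(1 + 2R/\eta)$.

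To match the form in the lemma, I would finally use the simple estimate $1 + 2R/\eta \leq 3R/\eta$, which holds whenever $\eta \leq R$. For the complementary range $\eta > R$, a single ball of radius $\eta$ around the origin already covers $B_R$, so $\ncal(\eta, B_R, d_2) = 1$ and the inequality holds trivially. I do not expect a real obstacle here: the whole argument reduces to the one-shot packing/covering duality plus a single volume comparison, which is exactly the route in Pisier's reference. The only mildly delicate point is treating the regime $\eta \approx R$ carefully, which the two-case split above handles cleanly.
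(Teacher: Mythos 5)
The paper does not prove this lemma---it is imported verbatim from Pisier's book---so there is no internal proof to compare against; your volumetric packing argument is the standard (and essentially the referenced) derivation, and it is correct. The chain maximal $\eta$-separated set $\Rightarrow$ $\eta$-cover, disjointness of the radius-$\eta/2$ balls, containment in $B_{R+\eta/2}$, and the volume ratio $M\leq(1+2R/\eta)^h$ are all sound, as is the reduction $1+2R/\eta\leq 3R/\eta$ for $\eta\leq R$.

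One small imprecision in your case split: for $\eta>R$ you correctly get $\ncal(\eta,B_R,d_2)=1$, but the claimed inequality $0\leq h\ln(3R/\eta)$ only ``holds trivially'' when $\eta\leq 3R$; for $\eta>3R$ the right-hand side is negative and the stated bound fails. This is a defect of the lemma as written rather than of your argument (such covering bounds are conventionally asserted only for $\eta$ small relative to $R$), and it is immaterial here because the paper invokes the lemma with $\eta=\epsilon/4$, $\epsilon\leq1$, under the assumption $h\ln(12LR)\geq1$, which keeps it in the nondegenerate regime. It would be cleaner to state the restriction $\eta\leq R$ explicitly rather than claim the complementary range is trivial.
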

Since $\bTheta \subset B_{R}$,
it is easy to verify that
$\ln\ncal(\eta,\bTheta,d_2) \leq \ln\ncal(\eta,B_R,d_2)$.
Based on the L-Lipschitz continuity assumption,
Lemma~\ref{lem:bound_on_cover_F}
establishes a bound
for $\ncal(\eta,\fcal,d_\infty)$,
where $\fcal=\{f_{\theta}:\theta \in \bTheta\}$.

\begin{lemma}
\label{lem:bound_on_cover_F}
Let $\fcal=\{f_{\theta}:\theta \in \bTheta\}$ and $ d_\infty(f_{\theta},f_{\tilde{\theta}})
=\sup_{(z,v) \in \zcal \times \vcal}
|f_{\theta}(z,v)-f_{\tilde{\theta}}(z,v)|$.
If Assumption \ref{ass} holds,
then
$\ln\ncal(\eta,\fcal,d_\infty) \leq h\ln(3RL/\eta)$.
\end{lemma}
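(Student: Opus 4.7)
The plan is to transfer a covering of $\bTheta$ (in the Euclidean metric $d_2$) to a covering of $\fcal$ (in the sup metric $d_\infty$) via the Lipschitz assumption, and then invoke Lemma~\ref{lem:bound_on_cover_B} on $B_R$. The key observation is that $L$-Lipschitz continuity gives a one-sided comparison between the two metrics: for every $\theta,\tilde\theta\in\bTheta$,
\[
d_\infty(f_\theta,f_{\tilde\theta})=\sup_{(z,v)}|f_\theta(z,v)-f_{\tilde\theta}(z,v)|\leq L\|\theta-\tilde\theta\|_2=L\,d_2(\theta,\tilde\theta).
\]
Hence an $(\eta/L)$-cover of $\bTheta$ in $d_2$ automatically induces an $\eta$-cover of $\fcal$ in $d_\infty$.

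The steps I would carry out are as follows. First, I would fix $\eta>0$ and let $\bTheta^\triangle\subseteq\bTheta$ be a minimal $(\eta/L)$-cover of $\bTheta$ with respect to $d_2$; by definition its cardinality is $\ncal(\eta/L,\bTheta,d_2)$. Second, I would set $\fcal^\triangle:=\{f_\theta:\theta\in\bTheta^\triangle\}$ and verify that it is an $\eta$-cover of $\fcal$ in $d_\infty$: for any $f_\theta\in\fcal$, pick $\tilde\theta\in\bTheta^\triangle$ with $\|\theta-\tilde\theta\|_2\leq\eta/L$; then the displayed inequality above yields $d_\infty(f_\theta,f_{\tilde\theta})\leq L\cdot(\eta/L)=\eta$. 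This shows $\ncal(\eta,\fcal,d_\infty)\leq\ncal(\eta/L,\bTheta,d_2)$. Third, since $\bTheta\subseteq B_R$, any $(\eta/L)$-cover of $B_R$ restricted to a set of centers close to $\bTheta$ gives one for $\bTheta$, so $\ncal(\eta/L,\bTheta,d_2)\leq\ncal(\eta/L,B_R,d_2)$. Finally, applying Lemma~\ref{lem:bound_on_cover_B} with the radius parameter $\eta/L$ yields
\[
\ln\ncal(\eta,\fcal,d_\infty)\leq\ln\ncal(\eta/L,B_R,d_2)\leq h\ln\!\Big(\frac{3R}{\eta/L}\Big)=h\ln\!\Big(\frac{3RL}{\eta}\Big),
\]
which is exactly the claim.

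The proof is essentially a one-line Lipschitz comparison plus an invocation of the Euclidean covering bound, so I do not anticipate a serious obstacle. The only mildly delicate point is the passage from a cover of $B_R$ to a cover of $\bTheta$ with centers in $\bTheta$ itself (the standard fix is to double the radius, or simply note that covering numbers are monotone in the set being covered and the inequality $\ncal(\eta/L,\bTheta,d_2)\leq\ncal(\eta/L,B_R,d_2)$ needs either a subset argument or a factor-of-two adjustment). Since the resulting bound absorbs constants harmlessly inside the logarithm, this technical detail does not affect the final statement.
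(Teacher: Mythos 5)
Your proposal is correct and follows essentially the same route as the paper: use the $L$-Lipschitz property to show that an $(\eta/L)$-cover of $B_R$ (or of $\bTheta$) in $d_2$ induces an $\eta$-cover of $\fcal$ in $d_\infty$, then apply Lemma~\ref{lem:bound_on_cover_B}. Your extra remark about cover centers lying in $\bTheta$ versus $B_R$ is a detail the paper also glosses over, and as you note it does not affect the stated bound.
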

\begin{proof}
  For any $\theta,\tilde{\theta}\in \bTheta$, by the Lipschitz continuity we know $d_\infty(f_\theta,f_{\tilde{\theta}})\leq L\|\theta-\tilde{\theta}\|_2$. Then, any $(\epsilon/L)$-cover of $B_R$ w.r.t. $d_2$ would imply an $\epsilon$-cover of $\fcal$ w.r.t. $d_\infty$. This together with Lemma \ref{lem:bound_on_cover_B} implies the stated result. The proof is complete.
\end{proof}

\begin{table*}[tbp]
  \centering
  \caption{Summary of the configuration scenarios and gathered performance matrix in each scenario. h is the \#parameters of the target algorithm. Tmax is the cutoff time. The $portgen$ generator \cite{johnson2007experimental} was used to generate the TSP instances (in which the cities are randomly distributed). For each scenario, $\bTheta_{M}$ was composed of the default parameter configuration and $M-1$ random configurations. 
  }
  \scalebox{0.66}{
  \begin{tabular}{c|c|c|c|c|c|c}
  \hline
  Scenario& Algorithm& \multicolumn{1}{l|}{Domain} &Bechmark & $M$  & $P$ & \multicolumn{1}{l}{Tmax} \\ \hline
  SATenstein-QCP          & SATenstein \cite{DKhudaBukhshXHL16}, h = 54 & SAT& Randomly selected from QCP \cite{aaai/GomesS97}  & 500 & 500 & 5s \\ \hline
  clasp-weighted-sequence & clasp \cite{lpnmr/GebserKNS07a}, h=98      & ASP&  "small" type weighted-sequence \cite{padl/LierlerSTW12}  & 500 & 120 & 25s \\ \hline
  LKH-uniform-400 & LKH \cite{helsgaun2000effective}, h=23       & TSP& Generated by $portgen$ \cite{johnson2007experimental}, \#city=400  & 500 & 250 & 10s          \\ \hline
  LKH-uniform-1000 & LKH \cite{helsgaun2000effective}, h=23        & TSP& Generated by $portgen$ \cite{johnson2007experimental}, \#city=1000  & 500 & 250 & 10s \\ \hline
  \end{tabular}}
  \label{tab:scenarios}
  \end{table*}

  \captionsetup[sub]{font=small}
  \begin{figure*}[tbp]
    \centering
      \begin{subfigure}[b]{0.505\columnwidth}
        \scalebox{1.0}{\includegraphics[width=\linewidth]{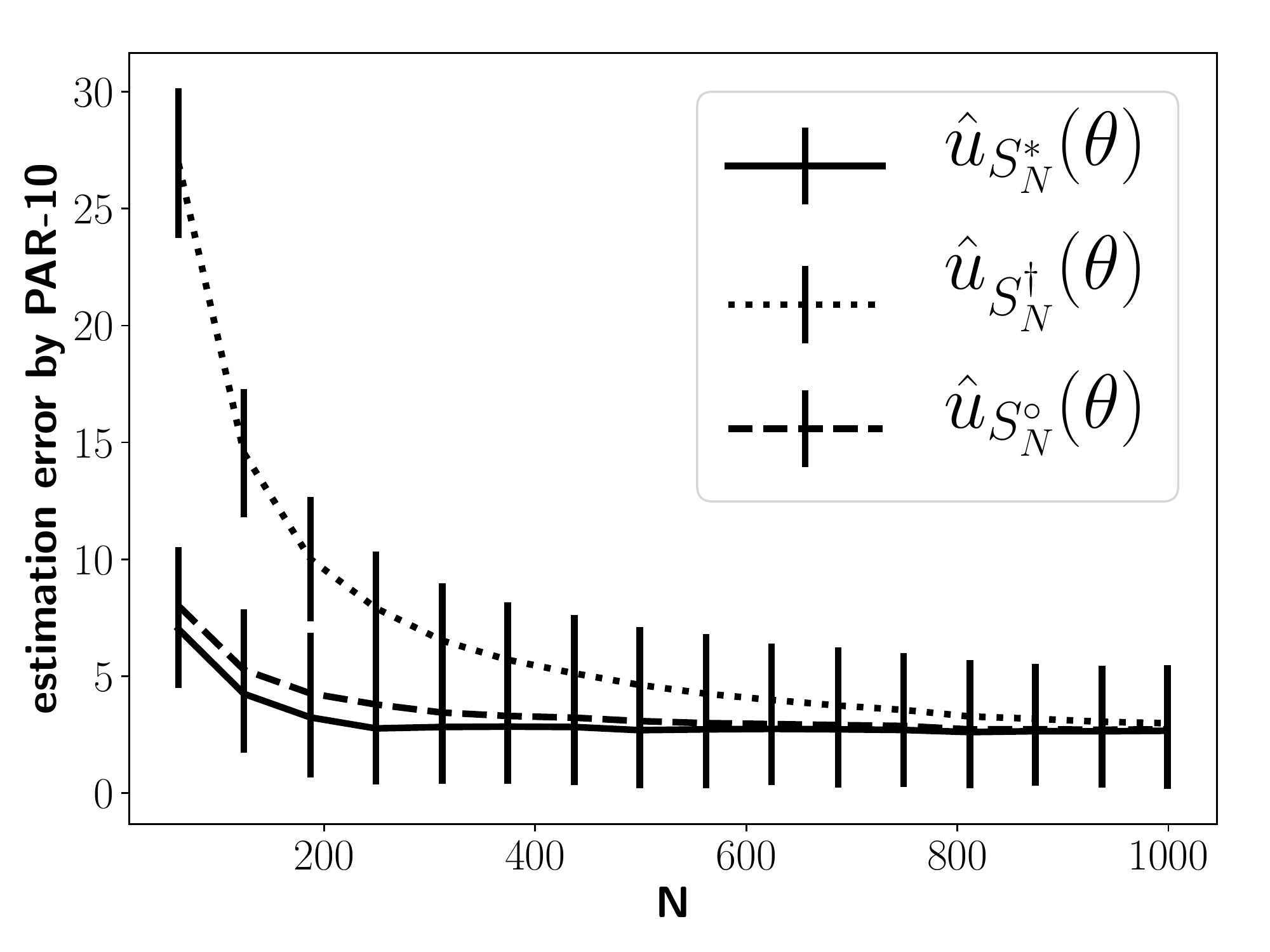}}
        \caption{SATenstein-QCP}
      \end{subfigure}
      \hfill
      \begin{subfigure}[b]{0.505\columnwidth}
        \scalebox{1.0}{\includegraphics[width=\linewidth]{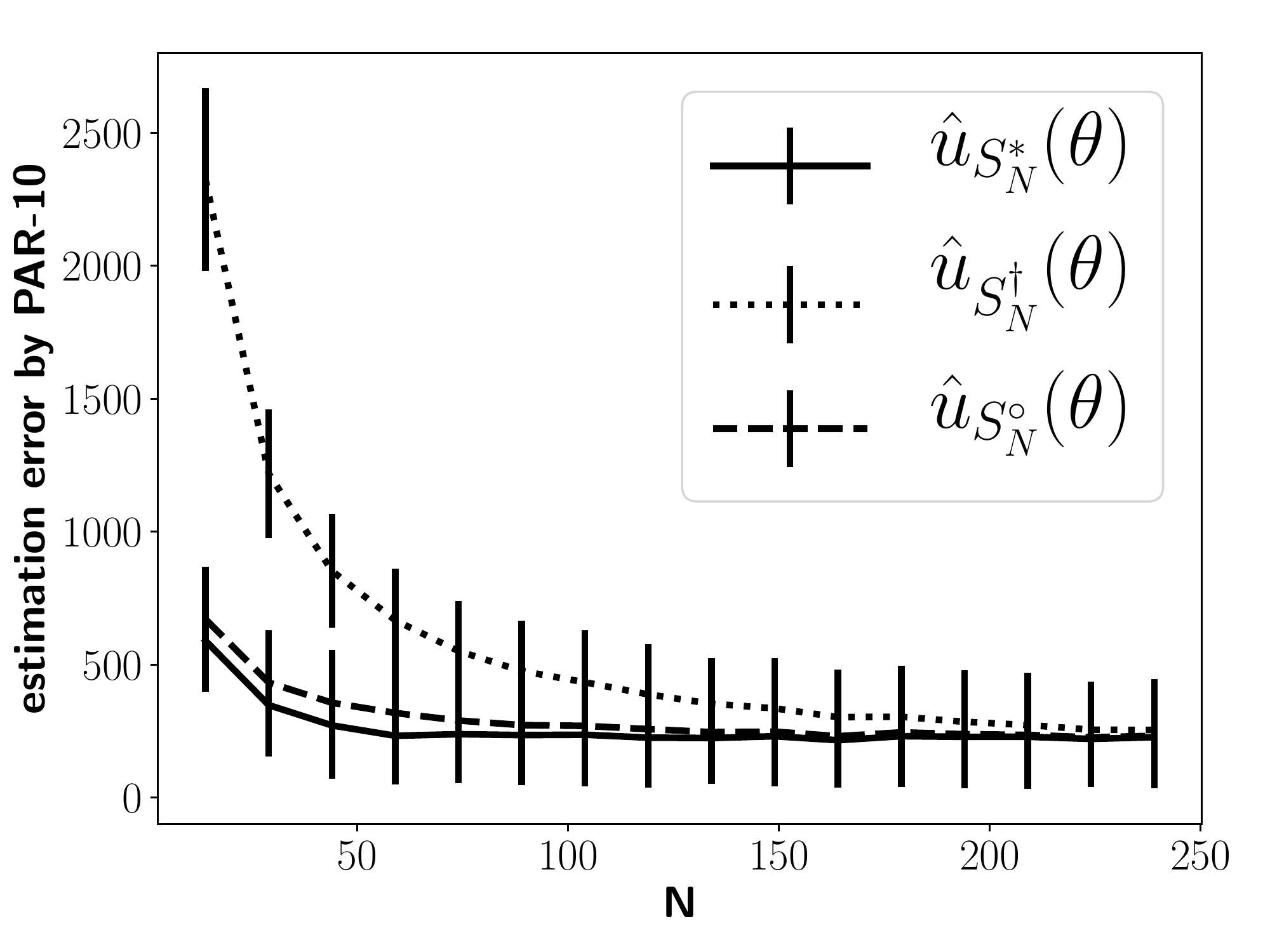}}
        \caption{clasp-weighted-sequence}
      \end{subfigure}
      \hfill
      \begin{subfigure}[b]{0.505\columnwidth}
        \scalebox{1.0}{\includegraphics[width=\linewidth]{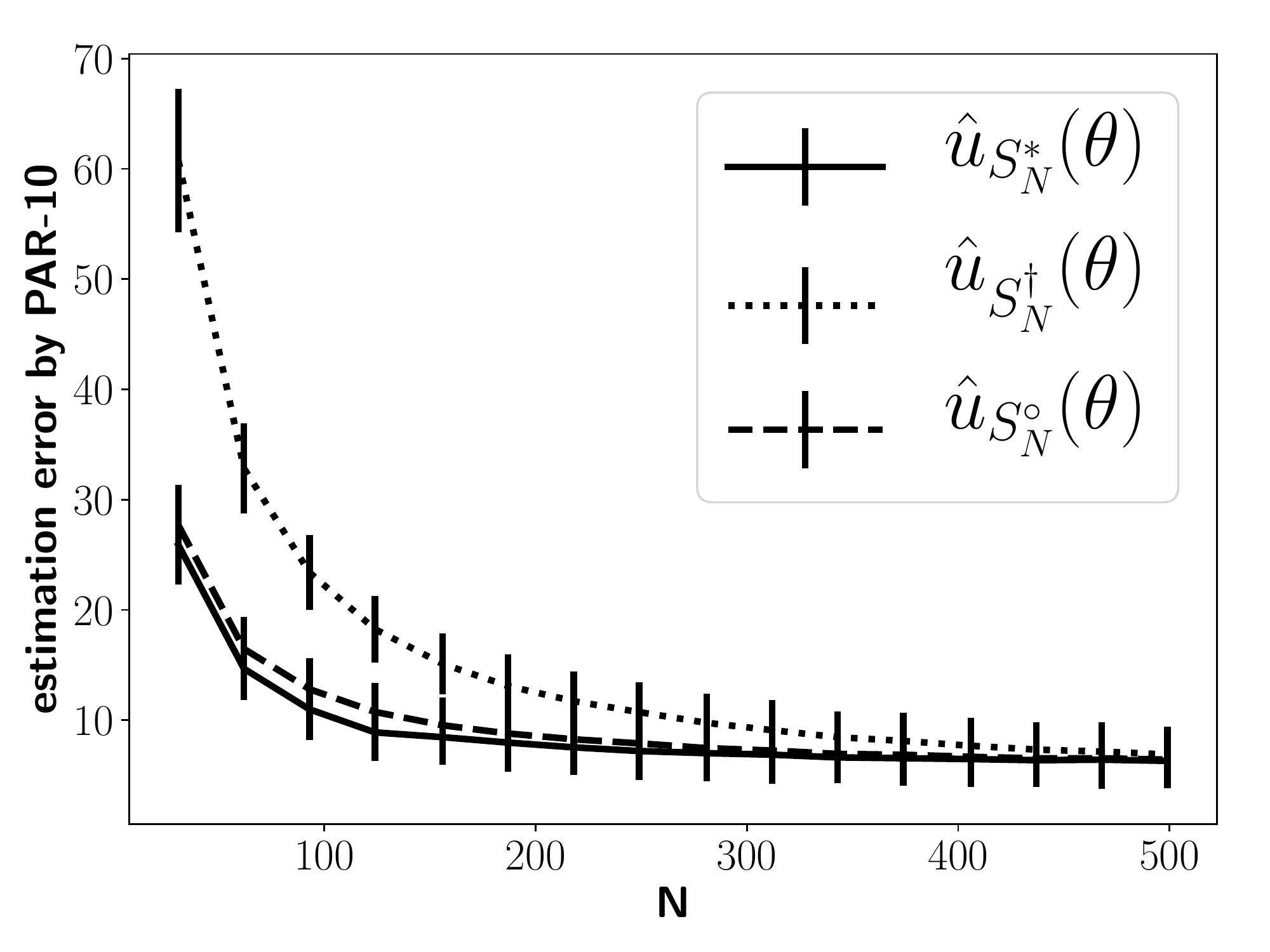}}
        \caption{LKH-uniform-400}
      \end{subfigure}
      \hfill
      \begin{subfigure}[b]{0.505\columnwidth}
        \scalebox{1.0}{\includegraphics[width=\linewidth]{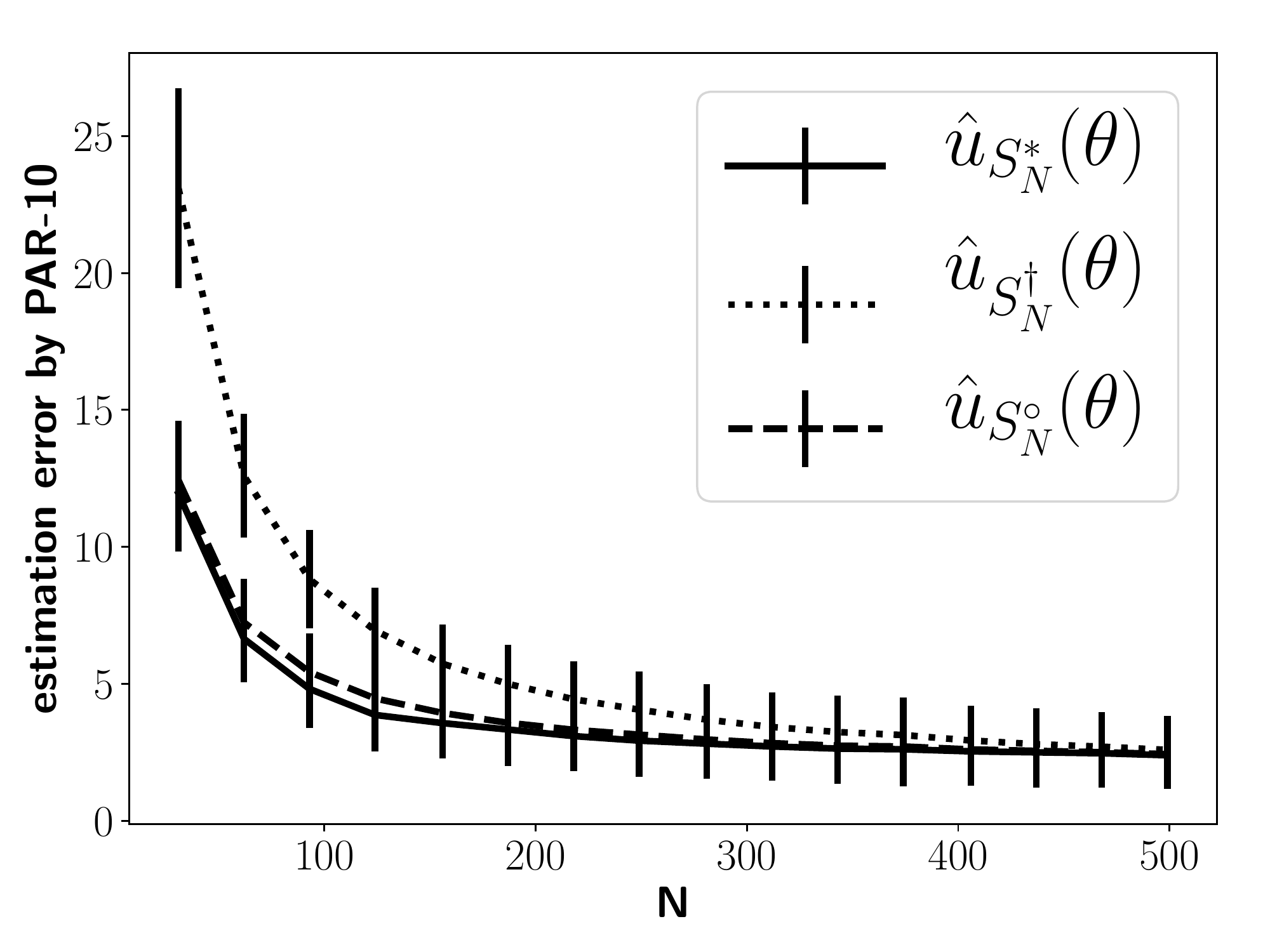}}
        \caption{LKH-uniform-1000}
      \end{subfigure}
    \caption{Estimation error for different estimators in different scenarios at $r_1=0.5$.}
    \label{fig:com_estimator}
  \end{figure*}

With the bound for $\ncal(\eta,\fcal,d_\infty)$,
the new bound for $u(\theta)-\hat{u}_{S_{N}}(\theta)$
is established in
Theorem~\ref{theorem:infinite}.

\begin{lemma}\label{lem:inequality-solution}
  For any positive constants $k, l, b, c$, the inequality $\epsilon^k l+b\ln\epsilon\geq c$ has a solution
  \begin{equation*}\label{eq:inequality-solution}
  \epsilon_0=\left(\frac{c+b\max(\ln l-\ln c,0)/k}{l}\right)^{1/k}.
\end{equation*}
\end{lemma}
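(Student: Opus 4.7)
The plan is to verify by direct substitution that the candidate $\epsilon_0$ given in the statement satisfies the inequality $\epsilon_0^k l + b\ln\epsilon_0 \geq c$. The algebra cleans up immediately because raising $\epsilon_0$ to the $k$-th power and multiplying by $l$ cancels the outer denominator, giving
\[
\epsilon_0^k l = c + \frac{b}{k}\max(\ln l - \ln c,\,0),
\]
so the inequality reduces to the slack condition
\[
\frac{b}{k}\max(\ln l - \ln c,\,0) + b\ln\epsilon_0 \geq 0.
\]

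I would then split into two cases according to which branch of the $\max$ is active. When $l \leq c$, the max term vanishes and $\epsilon_0 = (c/l)^{1/k} \geq 1$, so $b\ln\epsilon_0 \geq 0$ and the condition holds trivially. When $l > c$, the max equals $\ln(l/c) > 0$; substituting the explicit form $\ln\epsilon_0 = \frac{1}{k}\ln\bigl((c + b\ln(l/c)/k)/l\bigr)$ and combining logarithms telescopes the sum to
\[
\frac{b}{k}\ln\!\left(1 + \frac{b\ln(l/c)}{ck}\right),
\]
which is nonnegative because its argument exceeds $1$ whenever $b,c,k > 0$ and $l > c$.

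I do not expect any genuine obstacle, but it is worth flagging why the $\max$ cannot be dropped from the definition of $\epsilon_0$. A naive choice $\epsilon_0 = ((c + b(\ln l - \ln c)/k)/l)^{1/k}$ would break down when $l \ll c$, since then $\ln l - \ln c$ is strongly negative and could drive the numerator of the outer root to zero or below; even before that, the sign of the slack term flips. The $\max$ precisely handles this regime by falling back to $(c/l)^{1/k}$, whereupon the bound $\epsilon_0 \geq 1$ alone supplies all of the required nonnegative slack. This case-distinction structure, together with the one-line cancellation of $\epsilon_0^k l$, is the entire content of the argument.
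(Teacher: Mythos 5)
Your proof is correct and complete: the substitution $\epsilon_0^k l = c + \frac{b}{k}\max(\ln l-\ln c,0)$ reduces the claim to nonnegativity of the slack term, and both branches of the case analysis check out --- in particular the telescoping in the case $l>c$ to $\frac{b}{k}\ln\bigl(1+\frac{b\ln(l/c)}{ck}\bigr)$, whose argument exceeds $1$, is right, and in the case $l\leq c$ one indeed has $\epsilon_0=(c/l)^{1/k}\geq 1$ so $\ln\epsilon_0\geq 0$. The paper omits its own proof of this lemma for space reasons, so there is nothing to compare against, but direct verification by substitution is plainly the intended argument, and your remark on why the $\max$ is needed correctly identifies the only subtlety.
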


\begin{theorem}
  \label{theorem:infinite}
  If Assumption \ref{ass} holds and $h\ln(12LR)\geq1$,
then for any $0<\delta<1$, with probability $1-\delta$
   there holds:
   \small
   \begin{align*}
    &\sup_{\theta \in \bTheta}[u(\theta)-\hat{u}_{S_{N}}(\theta)] \\
    & \leq
    \sqrt{\frac{h\ln(12LR)+\ln(\frac{1}{\delta})+\frac{1}{2}h\ln\frac{N}{8\tau^2+\frac{4nC}{3}}}{N}\Big(8\tau^2+\frac{4nC}{3}\Big)},
   \end{align*}
   where $n, C,\tau^{2}, \bar{\sigma}^{2}_{WI},\bar{\sigma}^{2}_{AI}$ are defined
   the same as in Theorem~\ref{theorem:finite}.
\end{theorem}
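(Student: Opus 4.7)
The plan is to reduce the uniform bound over the infinite class $\bTheta$ to a finite $\eta$-cover, apply the Bernstein-type tail of Lemma~\ref{lem:concen} with a union bound over the cover, and then invert the resulting implicit inequality in $\epsilon$ via Lemma~\ref{lem:inequality-solution}. For a fixed $\eta>0$ I would pick a minimal $\eta$-cover $\fcal^\triangle\subset\fcal$ with respect to $d_\infty$, whose log-cardinality is bounded by $h\ln(3RL/\eta)$ by Lemma~\ref{lem:bound_on_cover_F}. For any $\theta\in\bTheta$ and its nearest neighbour $\theta^\triangle\in\fcal^\triangle$, the inequality $d_\infty(f_\theta,f_{\theta^\triangle})\le\eta$ gives $|u(\theta)-u(\theta^\triangle)|\le\eta$ (since $u$ is an expectation of $f_\theta$) and $|\hat{u}_{S_N}(\theta)-\hat{u}_{S_N}(\theta^\triangle)|\le\eta$ (since $\hat{u}_{S_N}$ is an arithmetic mean of values of $f_\theta$), yielding the discretization inequality
\[\sup_{\theta\in\bTheta}[u(\theta)-\hat{u}_{S_N}(\theta)]\le\sup_{\theta^\triangle\in\fcal^\triangle}[u(\theta^\triangle)-\hat{u}_{S_N}(\theta^\triangle)]+2\eta.\]

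Next, Lemma~\ref{lem:concen} applied to each $\theta^\triangle$ (with its individual variance proxy bounded by the global $\tau^2$) together with a union bound over $\fcal^\triangle$ gives, for any $\epsilon'>0$,
\[\mathrm{Prob}\Bigl\{\sup_{\theta^\triangle\in\fcal^\triangle}[u(\theta^\triangle)-\hat{u}_{S_N}(\theta^\triangle)]\ge\epsilon'\Bigr\}\le\exp\!\bigl(h\ln(3RL/\eta)-\tfrac{N\epsilon'^2}{2\tau^2+2nC\epsilon'/3}\bigr).\]
I would couple the two free parameters by choosing $\eta=\epsilon/4$, so that the discretization error $2\eta$ and the Bernstein deviation $\epsilon'=\epsilon-2\eta=\epsilon/2$ each absorb half of $\epsilon$. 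Requiring the right-hand side to be at most $\delta$, taking logarithms, and using $2\tau^2+nC\epsilon/3\le M/4$ with $M:=8\tau^2+4nC/3$ (valid when $\epsilon\le1$, which is the only informative regime since utilities lie in $[L,U]$ and $\epsilon>1$ makes the stated bound slack) reduces the condition to $(N/M)\epsilon^2+h\ln\epsilon\ge h\ln(12RL)+\ln(1/\delta)$. This is exactly the form solved by Lemma~\ref{lem:inequality-solution} with $k=2$, $l=N/M$, $b=h$, and $c=h\ln(12RL)+\ln(1/\delta)$; the hypothesis $h\ln(12LR)\ge1$ forces $\ln c\ge0$, so the $\max(\ln l-\ln c,0)$ term in the lemma's formula is at most $\ln(N/M)$, yielding $\epsilon\le\sqrt{(M/N)(c+(h/2)\ln(N/M))}$, which is the stated bound.

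The main obstacle is calibrating the three interlocking quantities—the cover radius $\eta$, the Bernstein deviation $\epsilon'$, and the $\epsilon$-dependent denominator of the Bernstein exponent—so that the resulting bound collapses to the specific constants $8\tau^2+4nC/3$ appearing in the theorem. The symmetric choice $\eta=\epsilon/4$ together with the denominator upper bound produces the factor of four that yields $M$, after which Lemma~\ref{lem:inequality-solution} handles the implicit $\epsilon$-dependence in closed form without needing to solve the full quadratic-plus-logarithm equation by hand.
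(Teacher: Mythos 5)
Your proposal is correct and follows essentially the same route as the paper's own proof: an $\epsilon/4$-cover of $\fcal$ in $d_\infty$ via Lemma~\ref{lem:bound_on_cover_F}, the split of $\epsilon$ equally between the discretization error and the Bernstein deviation, a union bound over the cover using Lemma~\ref{lem:concen}, the $\epsilon\leq 1$ bound on the denominator producing $8\tau^2+\frac{4nC}{3}$, and inversion of the resulting inequality via Lemma~\ref{lem:inequality-solution} under the hypothesis $h\ln(12LR)\geq 1$. No substantive differences.
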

\begin{proof}
  Without loss of generality we can assume $\epsilon\leq1$.
  Let $\{f_{\theta_1},...,f_{\theta_m}\}$
  be a $\epsilon/4$-cover of $\fcal$
  with $m=\ncal(\epsilon/4,\fcal,d_\infty)$, where
  $\fcal=\{f_{\theta}:\theta \in \bTheta\}$.
 By Definition~\ref{def:covering_numers},
 for any $f_{\theta} \in \fcal$
 there exists $f_{\theta_{j}} \in
 \{f_{\theta_1},...,f_{\theta_m}\}$,
 such that
 $d_\infty(f_{\theta},
f_{\theta_{j}})
=\sup_{(z,v) \in \zcal \times \vcal}
|f_{\theta}(z,v)-f_{\theta_{j}}(z,v)|
\leq \epsilon/4$;
 it follows that
 $|\ebb[f_{\theta}(z,v)]-\ebb[f_{\theta_{j}}(z,v)]|
 =|u(\theta)-u(\theta_{j})|
 \leq \epsilon/4
 $
 and
 $
 |\hat{u}_{S_{N}}(\theta)-\hat{u}_{S_{N}}(\theta_{j})|
= \frac{1}{N}\sum_{i=1}^{K}\sum_{j=1}^{n_{i}}|f_{\theta}(z_{i}, v_{i, j})
 -
 f_{\theta_{j}}(z_{i}, v_{i, j})|
 \leq \epsilon/4.
 $
 Then,
 \small
 \begin{align*}
   & \sup_{\theta\in\bTheta}\big[u(\theta)-\hat{u}_{S_N}(\theta)\big] \leq\\
   & \sup_{\theta\in\bTheta}\Big[u(\theta)-u(\theta_j)+u(\theta_j)-\hat{u}_{S_N}(\theta_j)+\hat{u}_{S_N}(\theta_j)-\hat{u}_{S_N}(\theta)\Big]\\
   &\leq \frac{\epsilon}{2}+\max_{j\in\{1,..,m\}}[u(\theta_{j})-\hat{u}_{S_{N}}(\theta_{j})].
 \end{align*}
 It then follows that $\mathrm{Prob}\{
\sup_{\theta \in \bTheta}[u(\theta)-\hat{u}_{S_{N}}(\theta)]
\geq \epsilon
\}
\leq
\mathrm{Prob}\{
\max_{j\in\{1,..,m\}}[u(\theta_{j})-\hat{u}_{S_{N}}(\theta_{j})] \geq \epsilon/2
\}\leq \sum_{j=1}^{m}
\mathrm{Prob}\{
[u(\theta_{j})-\hat{u}_{S_{N}}(\theta_{j})] \geq \epsilon/2
\}\leq m \mathrm{exp}(-\frac{\frac{N}{4}\epsilon^{2}}{2\tau^{2}+\frac{nC\epsilon}{3}})$, where the last inequality is due to Lemma \ref{lem:concen}.
We need to find a $\epsilon$ satisfying $\mathrm{exp}(h\ln(12RL/\epsilon)-\frac{N\epsilon^{2}}{8\tau^{2}+\frac{4nC\epsilon}{3}})\leq \delta$, for which it suffices to find a solution of (by $\epsilon\leq1$)
$\frac{N\epsilon^2}{8\tau^2+\frac{4nC}{3}}+h\ln\epsilon\geq h\ln(12LR)+\ln(1/\delta)$. This inequality takes the form of the inequality in Lemma~\ref{lem:inequality-solution} (the proof of Lemma~\ref{lem:inequality-solution} is omitted here due to space limitations).
We can apply Lemma \ref{lem:inequality-solution} to show that a solution is (note $h\ln(12LR)\geq1$)
\[
\epsilon=\bigg(\frac{h\ln(12LR)+\ln(1/\delta)+2^{-1}h\ln\frac{N}{8\tau^2+\frac{4nC}{3}}}{\frac{N}{8\tau^2+\frac{4nC}{3}}}\bigg)^{\frac{1}{2}}.
\]

\end{proof}

\subsection{Discussion}
There are some important findings from the above results.
First, both 
Theorem~\ref{theorem:finite} and Theorem~\ref{theorem:infinite}
relate the bounds on $u(\theta)-\hat{u}_{S_{N}}(\theta)$ with the complexity
of $\bTheta$,
and the bounds deteriorate as the complexity increases.
This means as the considered configuration space
gets more complex, there is a possibility that the estimation
error could be larger.
Second, as expected, as
$N$ and $K$ get larger,
the estimation error gets smaller,
and $\hat{u}_{S_{N}}(\theta)$
will converge to $u(\theta)$ with
probability 1 with $N \rightarrow \infty$
and $K \rightarrow \infty$.
Third, Corollary~\ref{cor:finite}
shows that, for the estimator
$\hat{u}_{S_{N}}(\theta^{\ast})$
which are widely used in current AAC methods,
the gain on error reduction
decreases rapidly as $N$ and $K$
get larger (which are also shown in Figure~\ref{fig:analysis}
in the experiments),
and the effects of increasing $N$ and $K$
also depend on
$\bar{\sigma}^{2}_{WI}$ and
$\bar{\sigma}^{2}_{AI}$,
two quantities
varying across different
algorithm configuration scenarios.
Thus for enhancing current AAC methods, 
instead of fixing $N$ as a large number (e.g., SMAC sets $N$ to 2000 by default)
and using as many training instances as possible,
it is more desirable to use different
$N$ and $K$ according to the
configuration scenario considered,
in which case
$N$ and $K$ may be adjusted dynamically
in the configuration process as more data
are gathered to estimate
$\bar{\sigma}^{2}_{WI}$ and
$\bar{\sigma}^{2}_{AI}$.

\section{Experiments}
\label{section5}

In this section, we present our experimental studies.
First we introduce our experiment setup.
Then, we verify our theoretical results in two facets: 1) comparison of different performance estimators;
2) the effects of different values of $m$ (the number of considered configurations),
$N$ (the number of runs of $\theta$ to estimate $u(\theta)$) and
$K$ (the number of training instances)
on the estimation error.

We conducted experiments based on a re-sampling approach \cite{Birattari2004},
which is often used for time-consuming empirical analysis.
Specifically, we
considered 4 different scenarios.
We selected two scenarios SATenstein-QCP and clasp-weighted-sequence from
the Algorithm Configuration Library (AClib) \cite{HutterLFLHLS14}
and built two new scenarios LKH-uniform-400/1000.
For each scenario, we
gathered a $M \times P \times 5$ matrix containing the performances of
$M$ configurations on $P$ instances,
with each configuration running on each
instance for 5 times.
Let $\bTheta_{M}$ be the set of the $M$ configurations
and
$\zcal_{P}$ be the set of the $P$ instances.
In the experiments,
when acquiring the performance of 
a configuration $\theta$ on an instance,
instead of actually running $\theta$,
the value stored in the corresponding
entry of the matrix was used.
The details of the scenarios and the performance matrices are summarized in Table~\ref{tab:scenarios}.
In the experiments the optimization goal considered is
the runtime needed to solve the problem instances
(for SAT and ASP) or to find the optima of the problem instances (for TSP).
In particular, the performance metric was set to Penalized Average Runtime–10 (PAR-10) \cite{hutter2009paramils},
which counts a timeout as 10 times the given cutoff time.

For convenience henceforth
we will use ``$\mathrm{split}_{P_{1}|P_{2}}$''
to denote that we
subsequently randomly select,
without replacement,
$P_{1}$ and $P_{2}$
instances from $\zcal_{P}$
as training instances
and test instances respectively.
For a given $\theta$,
we always used
the performance 
obtained by an estimator
on the training instances
as its training performance,
and used its performance
on the test instances as its
true performance.
We use 
$\mathrm{uniform\_es\_error(\bTheta)}$
to denote the maximal estimation error
across the configurations in $\bTheta$.

All the experiments were conducted on a Xeon machines with 128 GB RAM and 24 cores each 
(2.20 GHz, 30 MB Cache), running CentOS.
The code was implemented based on AClib \cite{HutterLFLHLS14}
\footnote{The code and the complete experiment results are
available at \url{https://github.com/EEAAC/ac_estimation_error}}.

\captionsetup[sub]{font=small}
\begin{figure*}[tbp]
  \centering
    \begin{subfigure}[b]{0.505\columnwidth}
      \scalebox{0.9}{\includegraphics[width=\linewidth]{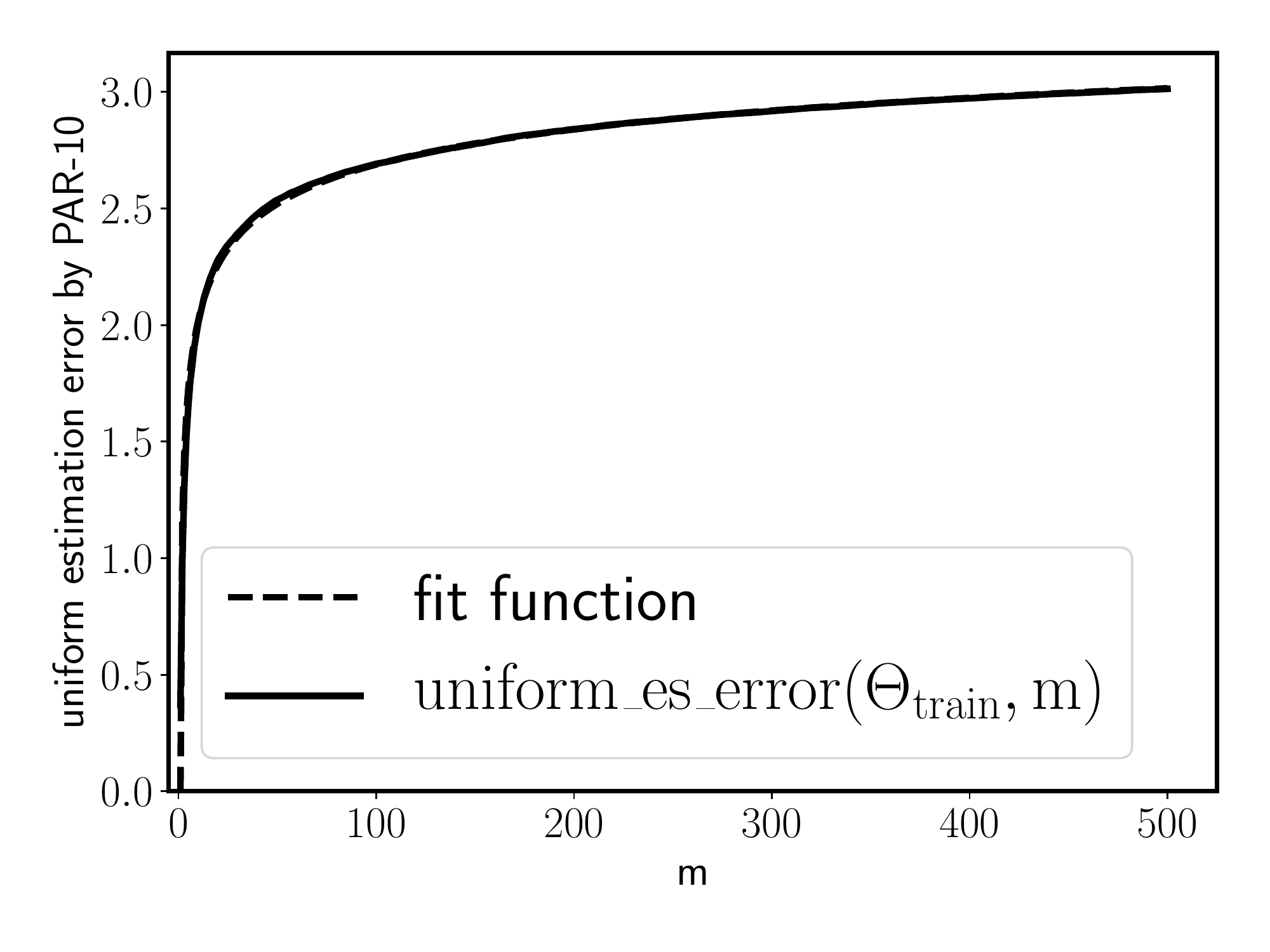}}
      \caption{SATenstein-QCP}
    \end{subfigure}
    \hfill
    \begin{subfigure}[b]{0.505\columnwidth}
      \scalebox{0.9}{\includegraphics[width=\linewidth]{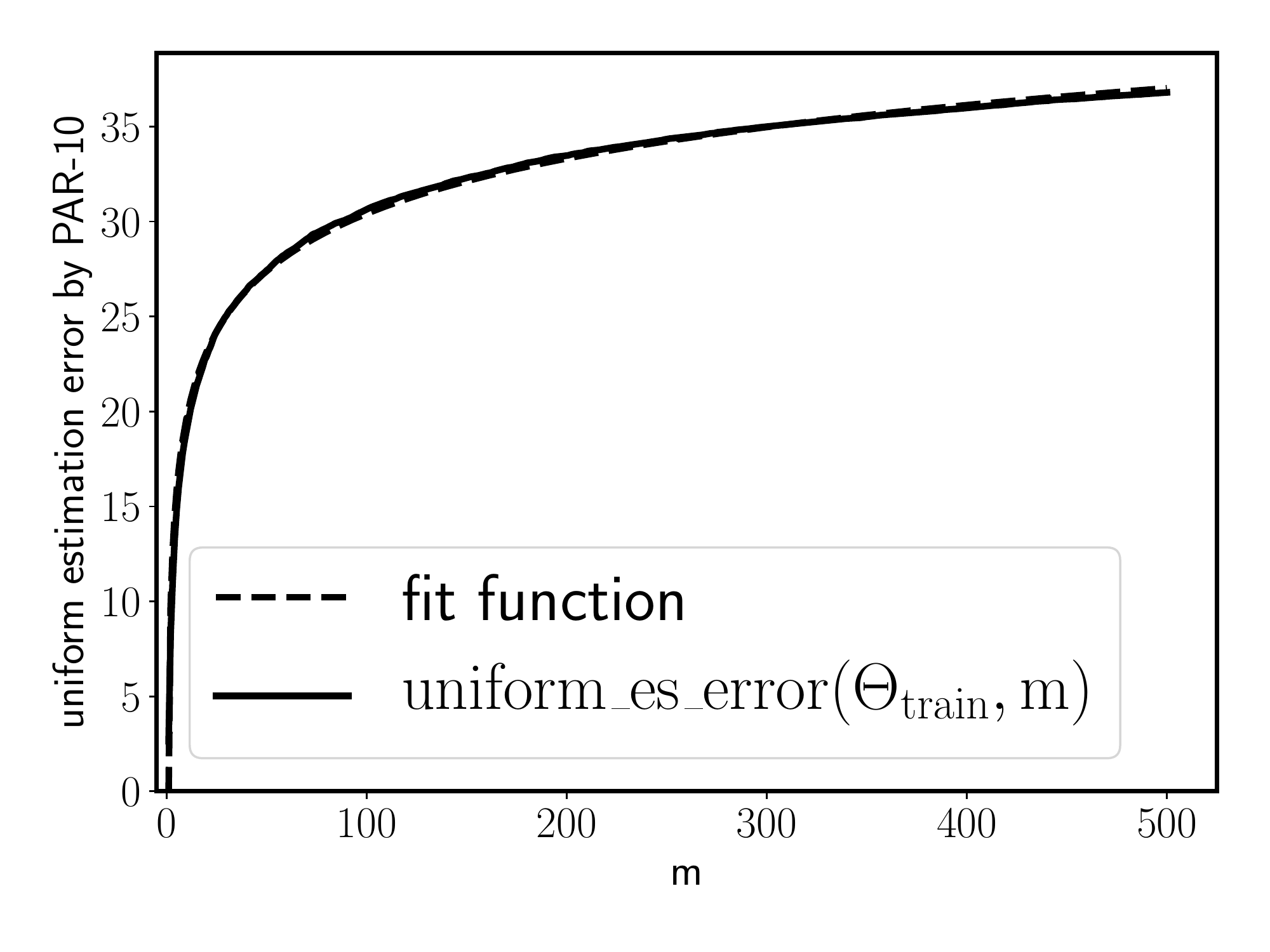}}
      \caption{clasp-weighted-sequence}
    \end{subfigure}
    \hfill
    \begin{subfigure}[b]{0.505\columnwidth}
      \scalebox{0.9}{\includegraphics[width=\linewidth]{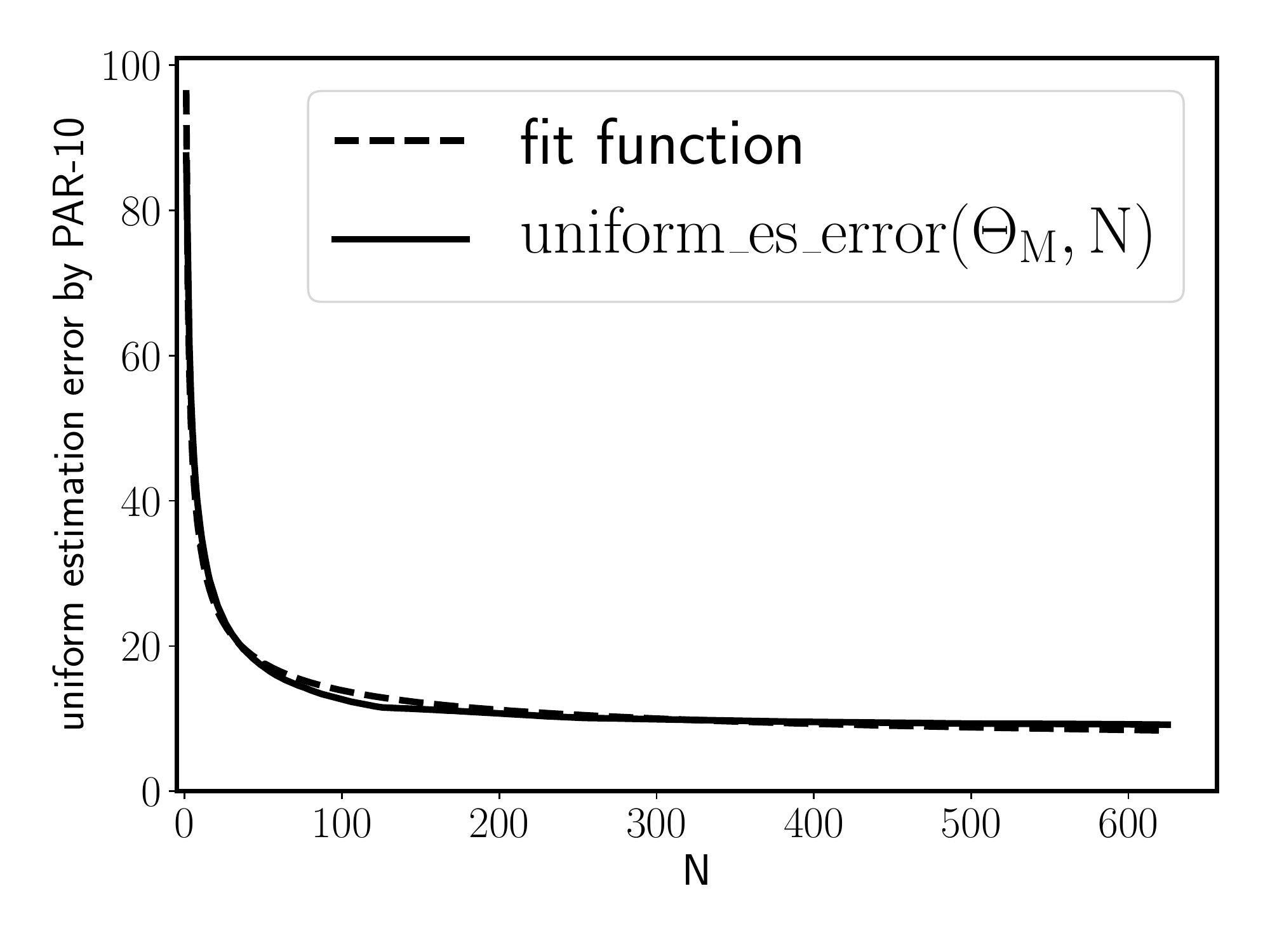}}
      \caption{LKH-uniform-400}
    \end{subfigure}
    \hfill
    \begin{subfigure}[b]{0.505\columnwidth}
      \scalebox{0.9}{\includegraphics[width=\linewidth]{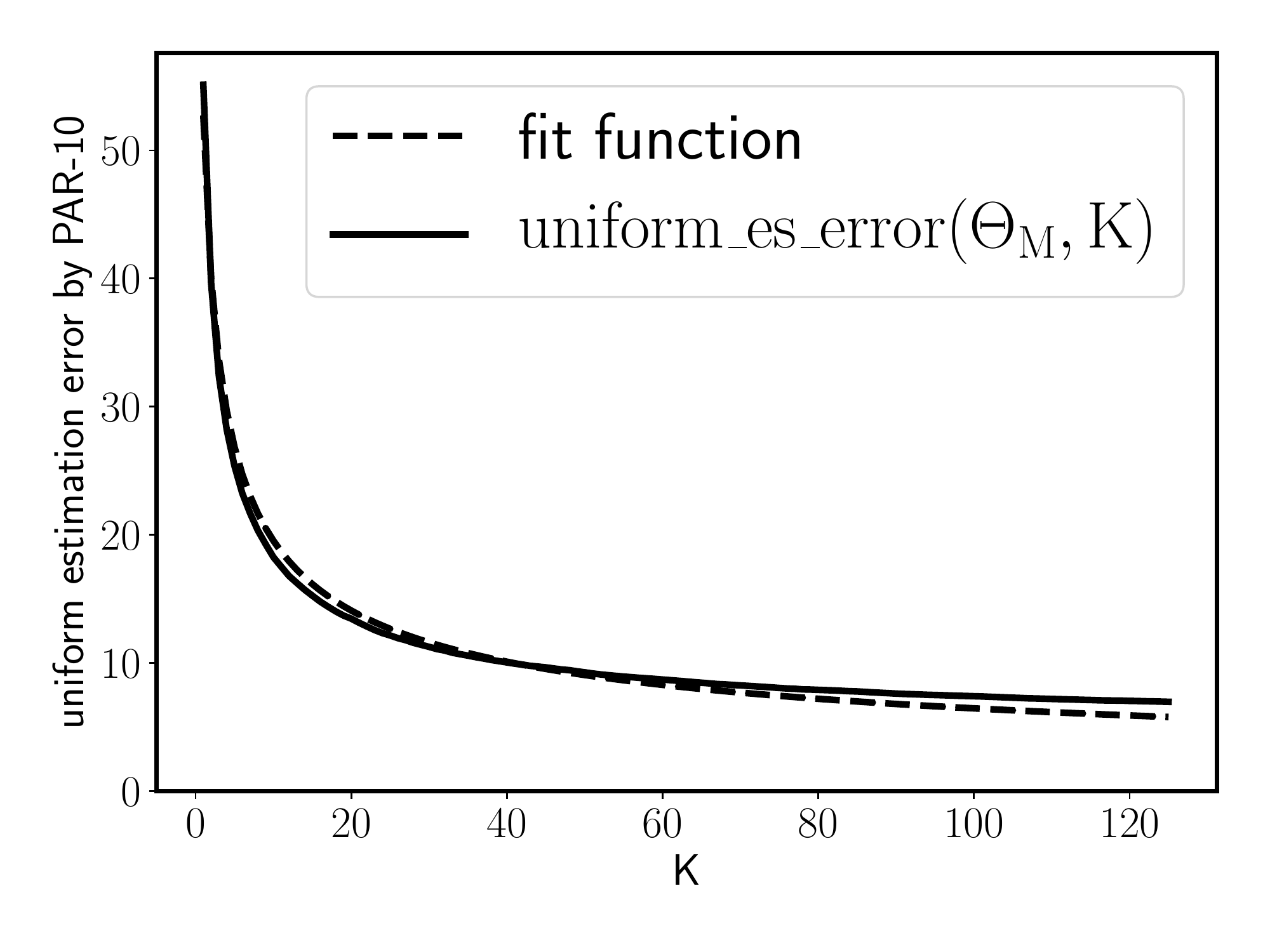}}
      \caption{LKH-uniform-1000}
    \end{subfigure}
  \caption{Uniform estimation error at different $m$, $N$ and $K$ and the fit functions based on the theoretical results.}
  \label{fig:analysis} 
\end{figure*}

\captionsetup[sub]{font=small}
\begin{figure*}[tbp]
  \centering
    \begin{subfigure}[b]{0.505\columnwidth}
      \scalebox{0.9}{\includegraphics[width=\linewidth]{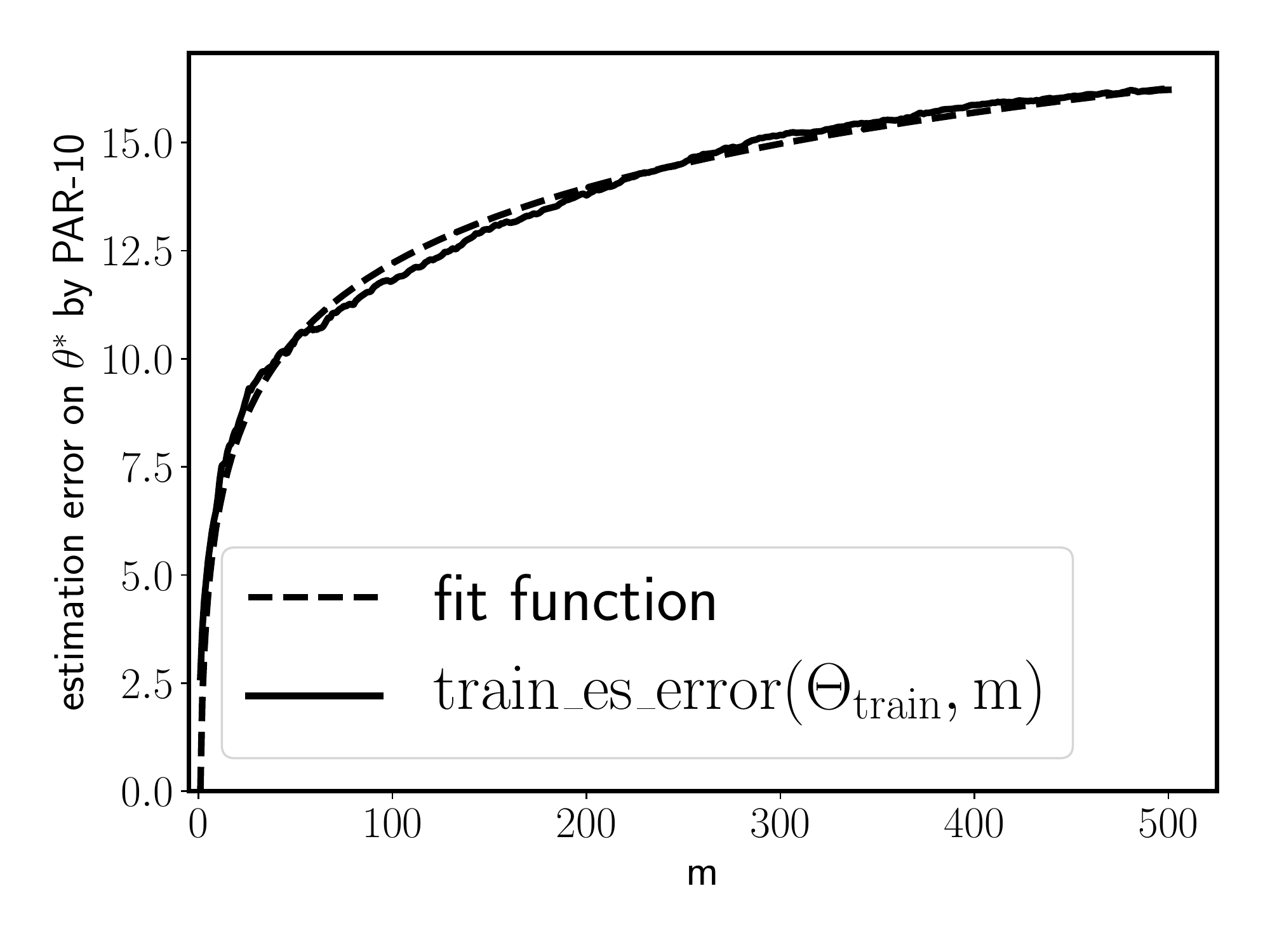}}
      \caption{clasp-weighted-sequence}
    \end{subfigure}
    \hfill
    \begin{subfigure}[b]{0.505\columnwidth}
      \scalebox{0.9}{\includegraphics[width=\linewidth]{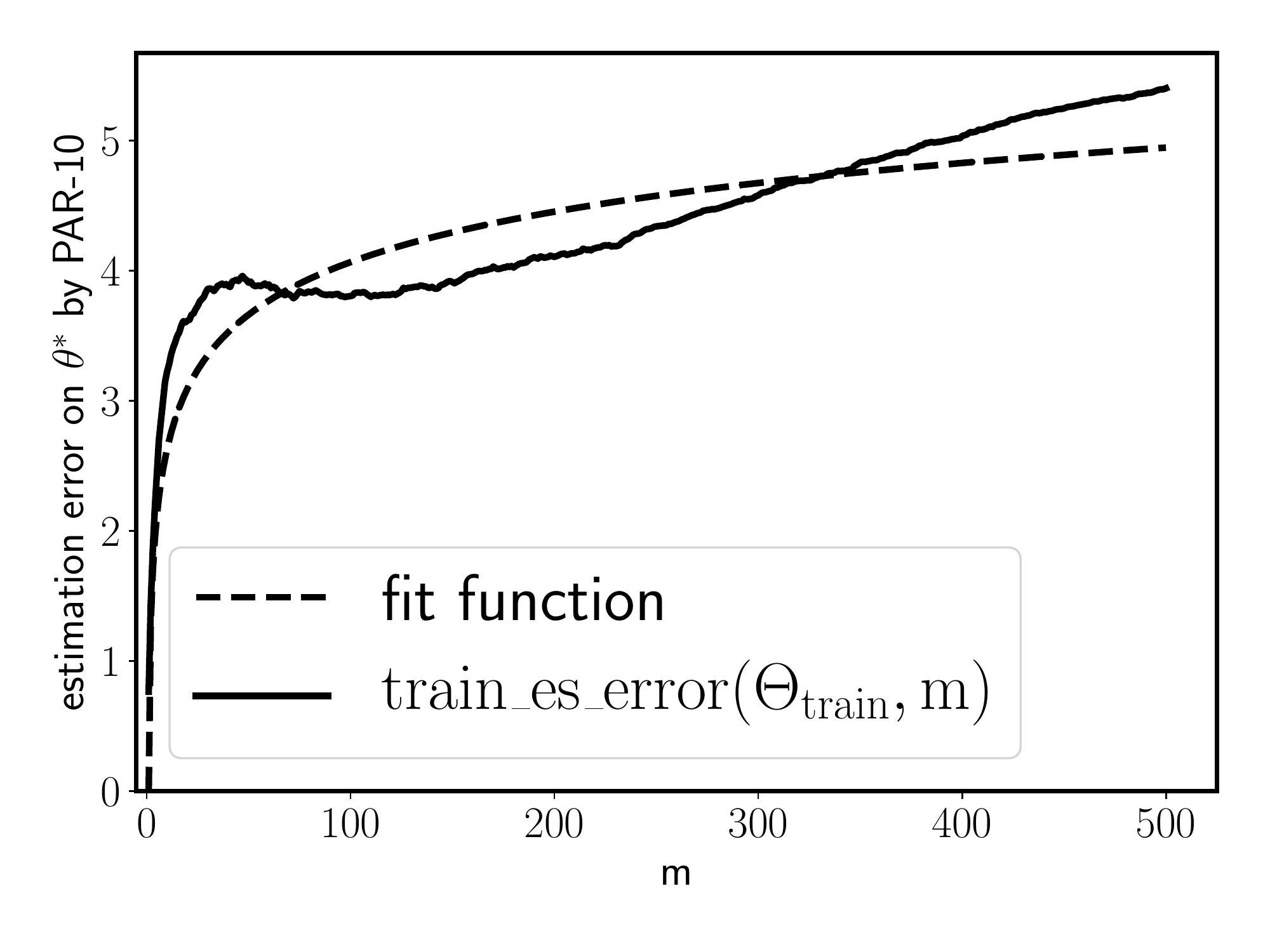}}
      \caption{LKH-uniform-1000}
    \end{subfigure}
    \hfill
    \begin{subfigure}[b]{0.505\columnwidth}
      \scalebox{0.9}{\includegraphics[width=\linewidth]{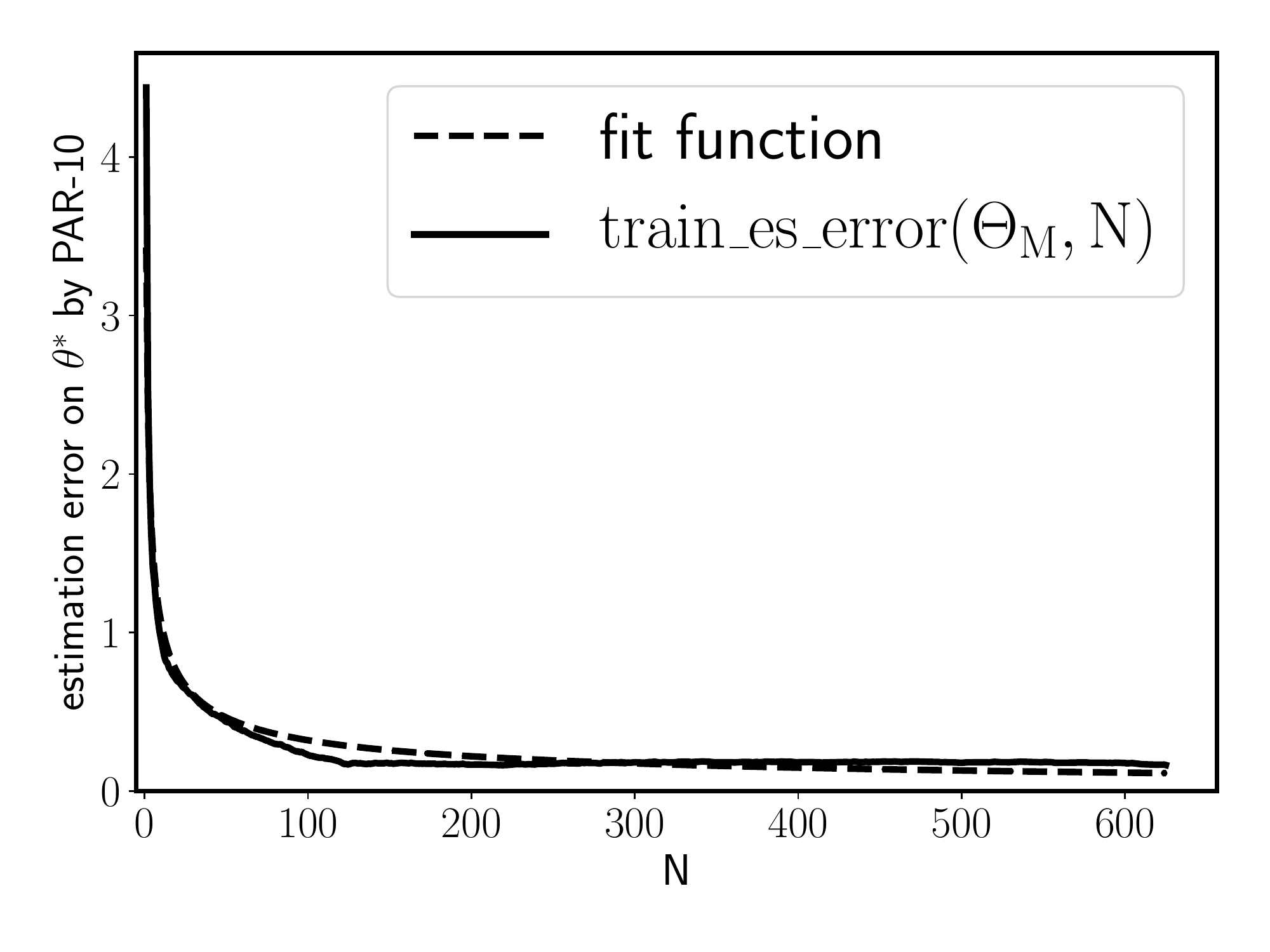}}
      \caption{LKH-uniform-400}
    \end{subfigure}
    \hfill
    \begin{subfigure}[b]{0.505\columnwidth}
      \scalebox{0.9}{\includegraphics[width=\linewidth]{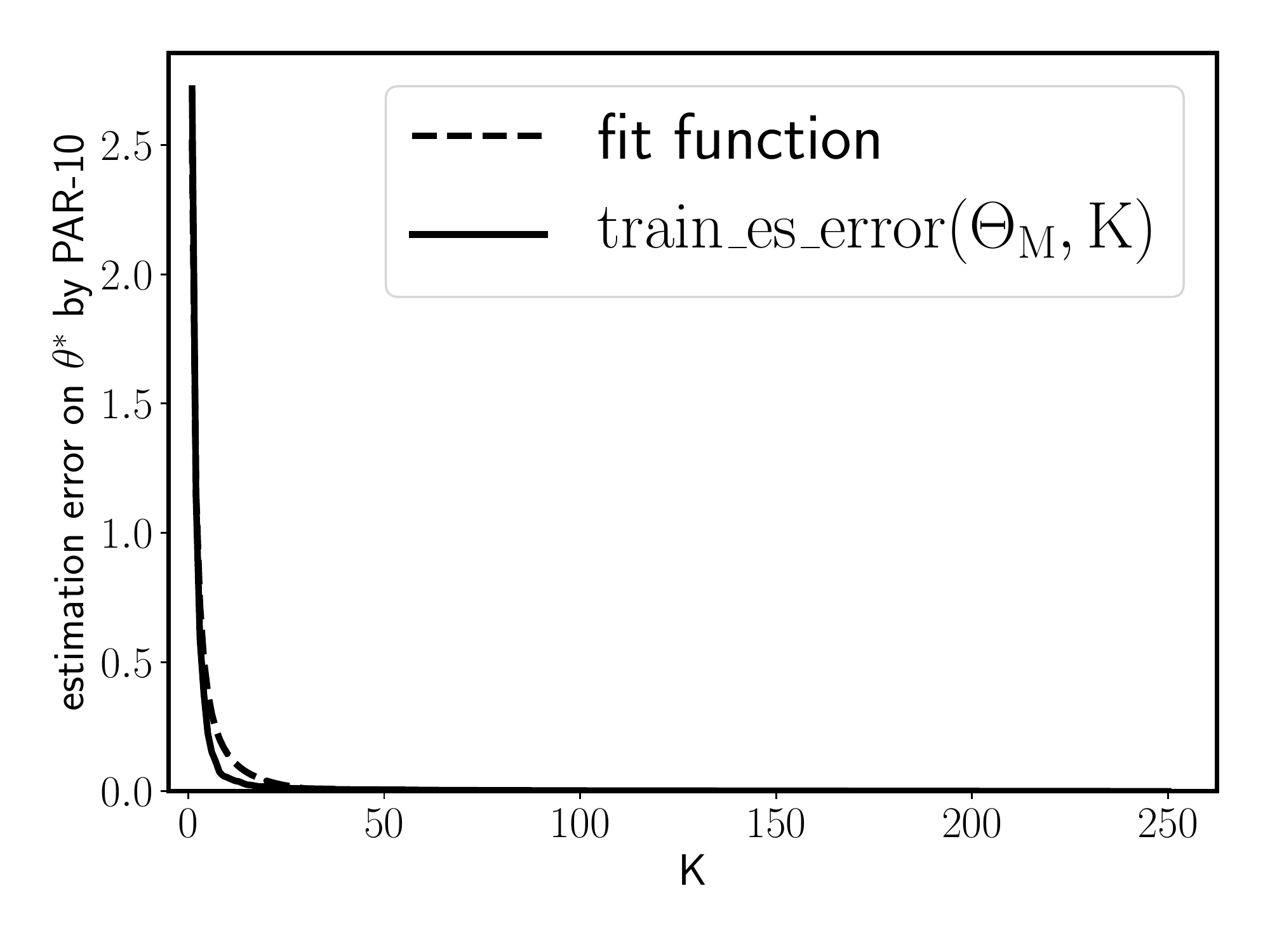}}
      \caption{SATenstein-QCP}
    \end{subfigure}
    \caption{Estimation error on
    $\theta^{\ast}$ at different $m$, $N$ and $K$ and the fit functions based on the theoretical results.}
    \label{fig:train_bound}
\end{figure*}

\textbf{Comparison of Different Estimators.}
We compared $\hat{u}_{S^{\ast}_{N}}(\theta)$ with two
estimators $\hat{u}_{S^{\dagger}_{N}}(\theta)$ and $\hat{u}_{S^{\circ}_{N}}(\theta)$.
For evaluating $\theta$,
$\hat{u}_{S^{\dagger}_{N}}(\theta)$
repeatedly randomly selects an instance
from $\zcal_{P}$ without replacement,
and runs $\theta$ for 5 times on the instance as long as the total number
of runs of $\theta$ not exceeding $N$.
$\hat{u}_{S^{\dagger}_{N}}(\theta)$ is greedier 
than $\hat{u}_{S^{\ast}_{N}}(\theta)$ in the sense that it ensures the estimated performance of $\theta$ on the used instances is as accurate as possible.
Another estimator $\hat{u}_{S^{\circ}_{N}}(\theta)$
is the one presented in \cite{Birattari2004},
which repeatedly randomly selects an instance from $\zcal_{P}$ with replacement,
and runs $\theta$ for a single time on the instance.
$\hat{u}_{S^{\circ}_{N}}(\theta)$ has more
randomness than $\hat{u}_{S^{\ast}_{N}}(\theta)$
since it does not ensure that
$N$ runs of $\theta$ are distributed evenly on all instances.
We set $K=r_{1}P$ and $N=r_{2}K$, and
ranged $r_{1}$ from 0.1 to 0.5 with a step of 0.05,
$r_{2}$ from 0.25 to 4.0 with a 
step of 0.25.
To reduce the variations of our experiments,
for each combination of $r_{1}$ and $r_{2}$,
we $\mathrm{split}_{{K}|P/2}$
for 2500 times,
and on each split,
we obtained the estimation error
of an estimator
on each $\theta \in \bTheta$,
and then calculated
the mean value,
which was further averaged
over all splits.
That is, for each combination
of $r_{1}$ and $r_{2}$,
we obtained a mean estimation
error for each estimator.
Due to space limitations, we only present
the results in terms of error bars (mean $\pm$ std)
at $r_{1}=0.5$ in Figure~\ref{fig:com_estimator}.
The results at other values of $r_{1}$
are similar.
Figure~\ref{fig:com_estimator} is
in line with Theorem~\ref{theorem:best_estimator}.
Overall $\hat{u}_{S^{\ast}_{N}}(\theta)$
is the best estimator among the three,
and its performance advantage is remarkable
when $N$ is small. When $N$ gets larger,
it is expected, and as shown in Figure~\ref{fig:com_estimator},
that the estimation error for
all three estimators will
converge to 0.
The fact that $\hat{u}_{S^{\ast}_{N}}(\theta)$
is better than $\hat{u}_{S^{\circ}_{N}}(\theta)$
indicates that it is necessary to
distribute $N$ runs of $\theta$
as evenly as possible over all instances.

\textbf{Estimation Error at Different $\bm{m}$, $\bm{N}$ and $\bm{K}$.}
We always fixed two values while ranging the other one.
We ranged $m$ from 1 to $M$,
while setting $K=P/2$ and $N=5K$.
We ranged $N$ from 1 to $5K$
while setting $K=P/2$ and $m=M$.
We ranged $K$ from 1 to $P/2$
while setting $N=5K$ and $m=M$.
For a given $m$,
we $\mathrm{split}_{K|P/2}$
for 2500 times,
and on each split,
we started with an empty set
$\bTheta_{train}$ of configurations and 
then repeatedly
expanded $\bTheta_{train}$
by adding a configuration
randomly selected from
$\bTheta_{M}\setminus\bTheta_{train}$.
Each time a new configuration $\theta$
was added to $\bTheta_{train}$,
$\mathrm{uniform\_es\_error(\bTheta_{train})}$
was recorded,
which was further averaged
over all 2500 splits.
That is, for each $m$,
we obtained a mean value of
$\mathrm{uniform\_es\_error(\bTheta_{train})}$,
denoted as $\mathrm{uniform\_es\_error(\bTheta_{train}, m)}$.
Similarly, for a given $N$
or a given $K$,
we always $\mathrm{split}_{K|P/2}$
for 2500 times,
and on each split,
we obtained
$\mathrm{uniform\_es\_error(\bTheta_{M})}$,
and then averaged it over all splits.
Thus for each considered
$N$ and $K$,
we obtained 
$\mathrm{uniform\_es\_error(\bTheta_{M}, N)}$
and
$\mathrm{uniform\_es\_error(\bTheta_{M}, K)}$, respectively.
Due to space limitations,
we only present parts of the results
in Figure~\ref{fig:analysis}
and other results 
are very similar.
To verify whether our analysis
(Theorem~\ref{theorem:finite})
correctly captures the dependence of estimation error on
$N$, $M$ and $K$,
we also plot the function
$f(m)=a\ln{m}+b\sqrt{\ln{m}}$
for $m$,
$f(N)=a+b\sqrt{1/N}$ for $N$
and 
$f(K)=a/K+b\sqrt{1/K}$
for $K$
in Figure~\ref{fig:analysis},
where the parameters
$a,b$ are computed by
fitting $f$ with
the data collected in the experiments,
i.e.,
$\{m \mapsto \mathrm{uniform\_es\_error(\bTheta_{train}, m)}:m \in \{1,...,M\}\}$,
$\{N \mapsto \mathrm{uniform\_es\_error(\bTheta_{M}, N)}:N \in \{1,...,\frac{5}{2}P\}\}$
and
$\{K \mapsto \mathrm{uniform\_es\_error(\bTheta_{M}, K)}: K \in \{1,...,\frac{1}{2}P\}\}$, respectively.
Overall Figure~\ref{fig:analysis}
demonstrates that our analysis managed to 
capture the dependence of uniform estimation error on $m$, $N$ and $K$.
It is worth noting that in the experiments
the effects of increasing $m$, $N$ and $K$
depend on $\bar{\sigma}^{2}_{WI}$ and
$\bar{\sigma}^{2}_{AI}$,
which vary across
configuration scenarios.
Moreover,
the estimation error becomes very low
and quite stable when $N$ approaches $K/2$,
which means running $\theta$ on 
half of the training instances
could already obtain a reliable estimate
of $u(\theta)$.

It is also meaningful to investigate whether
our analysis could reflect how 
the estimation error
on $\theta^{\ast}$,
i.e., the configuration with
the best training performance in $\bTheta$,
denoted as $\mathrm{train\_es\_error(\bTheta)}$,
would change.
We conducted the same experiments as described above to gather $\mathrm{train\_es\_error(\bTheta_{train}, m)}$,
$\mathrm{train\_es\_error(\bTheta_{M},N)}$
and
$\mathrm{train\_es\_error(\bTheta_{M},K)}$.
Figure~\ref{fig:train_bound} plots the results and the fit functions.
It could be seen that although
the bounds are not directly established
for $\mathrm{train\_es\_error(\bTheta)}$,
the findings also apply to
it to a considerable extent.

\section{Conclusion}
The main results of this paper include
the universal best performance estimator and
bounds on the uniform estimation error,
which were verified in extensive experiments.
Possible future directions include
data-dependent bounds that are tighter and
computable from realization of training instances
and analysis of the notorious over-tuning phenomenon
based on the results in this paper.

\bibliographystyle{aaai}
\fontsize{9.0pt}{10.3pt} \selectfont
\bibliography{mybib}

\end{document}